\newcommand{\citet}[1]{\citeauthor{#1}~(\citeyear{#1})}
\theoremstyle{plain}
\newtheorem{theorem}{Theorem}
\newtheorem{lemma}{Lemma}
\theoremstyle{definition}
\newtheorem{assumption}{Assumption}
\theoremstyle{remark}
\DeclareMathOperator*{\argmax}{arg\,max}
\def\mathhyphen{{\hbox{-}}}
\title{PAC Greedy Maximization with Efficient Bounds \\ on Information Gain for Sensor Selection}
\author{Yash Satsangi \\
University of Amsterdam \\
y.satsangi@uva.nl
\And 
Shimon Whiteson \\
University of Oxford \\ 
shimon.whiteson@cs.ox.ac.uk 
\And
Frans A. Oliehoek \\
University of Liverpool \\
University of Amsterdam \\
frans.oliehoek@liverpool.ac.uk
}
\begin{document}

\maketitle

\begin{abstract}
\emph{Submodular} function maximization finds application in a variety of real-world decision-making problems. However, most existing methods, based on greedy maximization, assume it is computationally feasible to evaluate $F$, the function being maximized.  Unfortunately, in many realistic settings $F$ is too expensive to evaluate exactly even once. We present \emph{probably approximately correct greedy maximization}, which requires access only to cheap anytime confidence bounds on $F$ and uses them to prune elements.  We show that, with high probability, our method returns an approximately optimal set. We also propose novel, cheap confidence bounds for \emph{conditional entropy}, which appears in many common choices of $F$ and for which it is difficult to find unbipased or bounded estimates. Finally, results on a real-world dataset from a multi-camera tracking system in a shopping mall demonstrate that our approach performs comparably to existing methods, but at a fraction of the computational cost.
\end{abstract}

\section{Introduction}
\nocite{*}

\emph{Submodularity} is a property of set functions that formalizes the notion of \emph{diminishing returns} i.e., adding an element to a set increases the value of the set function by a smaller or equal amount than adding that same element to a subset. Many real-world problems involve maximizing submodular functions, e.g., summarizing text \cite{textsumm1,textsumm2}, selecting subsets of training data for classification \cite{activeLearn1}, or selecting sensors to minimize uncertainty about a hidden variable \cite{satsangi}\footnote{This is an improved version of this paper. Lemma 1 was found to be incorrect in the earlier version (now corrected). We thank Csaba Szepesv\'{a}ri for pointing that.}. 

Formally, given a ground set $\mathcal{X} = \{1 ,2 \dots n \}$, a set function $F : 2^{\mathcal{X}} \to \mathbb{R}$, is submodular if for every $\mathcal{A}_{M} \subseteq \mathcal{A}_{N} \subseteq \mathcal{X}$ and $i \in \mathcal{X} \setminus \mathcal{A}_N$,
\begin{equation}
\Delta_{F}(i|\mathcal{A}_{M}) \geq \Delta_{F}(i|\mathcal{A}_{N}),
\end{equation}
where $\Delta_{F}(i|\mathcal{A}) = F(\mathcal{A} \cup i) - F(\mathcal{A})$ is the \emph{marginal gain} of adding $i$ to $A$. 
Typically, the aim is to find an $\mathcal{A}^{*}$  that maximizes $F$ subject to certain constraints.  Here, we consider a constraint on $\mathcal{A}^{*}$'s size: $\mathcal{A}^{*} = \arg\max_{\mathcal{A} \subseteq \mathcal{X}:|\mathcal{A}| \leq k} F(\mathcal{A})$. 

As $n$ increases, the $n \choose k$ possibilities for $\mathcal{A}^{*}$ grow rapidly, rendering naive maximization intractable. Instead, \emph{greedy maximization} finds an approximate solution $\mathcal{A}^{G}$  faster by iteratively adding to a partial solution the element that maximizes the marginal gain. \citet{nemhauser} showed that the value obtained by greedy maximization is close to that of full maximization, i.e.,
$F(\mathcal{A}^{G}) \geq (1 - e^{-1})F(\mathcal{A}^{*})$, if $F$ is submodular, non-negative and monotone.

\emph{Lazy greedy maximization} \cite{lazy} accelerates greedy maximization by pruning elements whose marginal gain on the last iteration ensures that their marginal gain on the current iteration cannot be maximal.
\emph{Lazier greedy maximization} \cite{lazier} provides further speedup by evaluating the marginal gain only of a randomly sampled subset of elements at each iteration.  Other variations \cite{multistage,thresholdgreedy} also minimize the number of marginal gain computations.

However, these methods assume it is computationally feasible to exactly compute $F$, and thus the marginal gain.  In many settings, this is not the case.  For example, consider a surveillance task \cite{satsangi} in which an agent aims to minimise uncertainty about a hidden state by selecting a subset of sensors that maximise \emph{information gain}. Computing information gain is computationally expensive, especially when the hidden state can take many values, as it involves an expectation over the entropy of posterior beliefs about the hidden state. When surveilling large areas like shopping malls, exactly computing the entropy of a single posterior belief becomes infeasible, let alone an expectation over them.

In this paper, we present a new algorithm called \emph{probably approximately correct greedy maximization}.  Rather than assuming access to $F$ itself, we assume  access only to confidence bounds on $F$. In particular, we assume that these bounds are cheaper to compute than $F$ and are \emph{anytime}, i.e., we can tighten them by spending more computation time, e.g., by generating additional samples. Inspired by lazy greedy maximization, our method uses confidence bounds to prune elements, thereby avoiding the need to further tighten their bounds. Furthermore, we provide a PAC analysis that shows that, with high probability, our method returns an approximately optimal set.

Given an unbiased estimator of $F$, it is possible to use concentration inequalities like Hoeffding's inequality to obtain the confidence bounds needed by PAC greedy maximization. Unfortunately, many applications, such as sensor placement and decision tree induction, require information-theoretic definitions of $F$ such as information gain.  These definitions depend on computing entropy over posterior beliefs, which are impossible to estimate in unbiased way \cite{Paninski}. The absence of an unbiased estimator renders Hoeffding's inequality inapplicable and makes it hard to obtain computationally \emph{cheap} confidence bounds on conditional entropy \cite{IGestimates,racing}. Therefore, in this paper, we propose novel, cheap confidence bounds on conditional entropy.

Finally, we apply PAC greedy maximization with these new confidence bounds to a real-life dataset collected by agents controlling a multi-camera tracking system employed in a shopping mall. Our empirical results demonstrate that our approach performs comparably to greedy and lazier greedy maximization, but at a fraction of the computational cost, leading to much better scalability.

\section{Background}

Given a set function $F:2^{\mathcal{X}} \to \mathbb{R}$, \emph{greedy maximization} \cite{nemhauser}  computes a subset $\mathcal{A}^{G} \subseteq \mathcal{X}$ that approximates $\mathcal{A}^{*} = \arg\max_{\mathcal{A} \in
 \mathcal{A}^{+}} F(\mathcal{A})$, where $\mathcal{A}^{+} = \{\mathcal{A} \subseteq \mathcal{X}: |\mathcal{A}| \leq k\}$. As shown in Algorithm \ref{GreedyAlg}, it does so by repeatedly adding to $\mathcal{A}^{G}$ the element $i$ that maximizes the marginal gain $\Delta_{F}(i|\mathcal{A}^{G})$. Because it is greedy, this method is much faster than naive maximization.

\begin{algorithm}
\caption{$\mathtt{greedy}\mathhyphen\mathtt{max}(F,\mathcal{X},k)$}\label{GreedyAlg}
\begin{algorithmic}
\State $\mathcal{A}^{G} \gets \emptyset$
\For {$m = 1 \ to \ k$}
\State $\mathcal{A}^{G} \gets \mathcal{A}^{G} \cup \argmax_{i \in \mathcal{X} \setminus \mathcal{A}^G}\Delta_{F}(i|\mathcal{A}^{G})$
\EndFor
\State return $\mathcal{A}^{G}$
\end{algorithmic}
\end{algorithm}

\noindent \citet{nemhauser} showed that, under certain conditions, this method has bounded error.
\begin{theorem} (Nemhauser et al., 1978)
If $F$ is non-negative, monotone and submodular%
, then
$F(\mathcal{A}^{G}) \geq (1 - e^{-1}) F(\mathcal{A}^{*})$.
\end{theorem}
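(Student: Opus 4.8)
The plan is to track the suboptimality gap of the partial greedy solution after each iteration and show that it shrinks by a constant multiplicative factor at every step. To set up the bookkeeping, I would write $\mathcal{A}^{G}_{m}$ for the set held by the algorithm after $m$ iterations, so that $\mathcal{A}^{G}_{0} = \emptyset$ and $\mathcal{A}^{G}_{k} = \mathcal{A}^{G}$, and let $i_{m+1}$ denote the element added during iteration $m+1$.

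The central step is a covering inequality: for every $m$,
\[
F(\mathcal{A}^{*}) \leq F(\mathcal{A}^{G}_{m}) + \sum_{i \in \mathcal{A}^{*} \setminus \mathcal{A}^{G}_{m}} \Delta_{F}(i|\mathcal{A}^{G}_{m}).
\]
To establish this I would first invoke monotonicity to write $F(\mathcal{A}^{*}) \leq F(\mathcal{A}^{*} \cup \mathcal{A}^{G}_{m})$, then expand $F(\mathcal{A}^{*} \cup \mathcal{A}^{G}_{m}) - F(\mathcal{A}^{G}_{m})$ as a telescoping sum in which the elements of $\mathcal{A}^{*} \setminus \mathcal{A}^{G}_{m}$ are added one at a time, and finally bound each term of that sum by $\Delta_{F}(i|\mathcal{A}^{G}_{m})$ using submodularity, since the marginal gain of adding $i$ to any superset of $\mathcal{A}^{G}_{m}$ is no larger than its gain when added to $\mathcal{A}^{G}_{m}$ itself. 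I expect this telescoping-plus-submodularity argument to be the main obstacle, because it is the one place where both structural hypotheses on $F$ genuinely enter and where the accounting over the growing intermediate set must be handled carefully.

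Next I would exploit the greedy choice. Since $i_{m+1}$ maximizes the marginal gain at step $m+1$, every summand $\Delta_{F}(i|\mathcal{A}^{G}_{m})$ in the covering inequality is at most $\Delta_{F}(i_{m+1}|\mathcal{A}^{G}_{m}) = F(\mathcal{A}^{G}_{m+1}) - F(\mathcal{A}^{G}_{m})$, and the sum has at most $k$ terms because $|\mathcal{A}^{*}| \leq k$. Writing $\delta_{m} = F(\mathcal{A}^{*}) - F(\mathcal{A}^{G}_{m})$, this yields the recurrence $\delta_{m} \leq k\,(\delta_{m} - \delta_{m+1})$, which rearranges to $\delta_{m+1} \leq (1 - 1/k)\,\delta_{m}$.

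Finally I would unroll the recurrence. Induction gives $\delta_{k} \leq (1 - 1/k)^{k}\,\delta_{0}$, and non-negativity (in particular $F(\emptyset) \geq 0$) gives $\delta_{0} = F(\mathcal{A}^{*}) - F(\emptyset) \leq F(\mathcal{A}^{*})$. Applying the elementary bound $(1 - 1/k)^{k} \leq e^{-1}$ then produces $F(\mathcal{A}^{*}) - F(\mathcal{A}^{G}) \leq e^{-1} F(\mathcal{A}^{*})$, which rearranges to the claimed $F(\mathcal{A}^{G}) \geq (1 - e^{-1}) F(\mathcal{A}^{*})$.
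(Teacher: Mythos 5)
Your proof is correct: it is the standard Nemhauser--Wolsey--Fisher argument, and the paper itself states this theorem without proof, citing Nemhauser et al.~(1978). Your route in fact mirrors the skeleton the paper uses for its own Theorem~2 (the PAC analogue): your covering inequality is exactly the paper's bound obtained from monotonicity, telescoping, and submodularity (its equation~(12)), and your recurrence $\delta_{m+1} \leq (1-1/k)\,\delta_{m}$ plus induction corresponds to the paper's application of Lemma~1 followed by induction on $m$, with the $\epsilon_1$ and probability terms set to zero.
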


\emph{Lazy greedy maximization} \cite{lazy} accelerates greedy maximization by pruning elements whose marginal gain cannot be maximal by maintaining a priority queue of all elements in which each element's priority is its marginal gain computed in the \emph{previous} iteration. If in the current iteration, the marginal gain of the element with the highest priority is higher than the priority of the next element, then the current iteration is terminated since submodularity guarantees that the marginal gain of the remaining elements can only decrease. Lazy greedy maximization computes the same $\mathcal{A}^{G}$ as greedy maximization and is much faster in practice. 

\section{Problem Setting}

In this paper, we consider a variation on submodular function maximization in which evaluating $F$, and therefore the marginal gain, is prohibitively expensive, rendering greedy and lazy greedy maximization inapplicable.  Instead, we assume access to computationally cheap confidence bounds on $F$.  
\begin{assumption} \label{as:zero}
We assume access to anytime upper and lower confidence bounds on $F(\mathcal{A})$ and a  $\mathtt{tighten}(\mathcal{A}, t)$ procedure that for all $\mathcal{A} \in \mathcal{A}^{+}$ that takes in as input arguments $\mathcal{A}$ and $t$ ($t$ is a positive integer) and returns $U_{t}(\mathcal{A})$ and $L_{t}(\mathcal{A})$ such that with probability $1 - \frac{\delta_{l}}{nt(t+1)}$, $L_{t}(\mathcal{A}) \leq F(\mathcal{A})$ and with probability $1 - \frac{\delta_{u}}{nt(t + 1)}$, $U_{t}(\mathcal{A}) \geq F(\mathcal{A})$, for some fixed value of $\delta_l$ and $\delta_u$. Also, we assume that the lower and upper confidence bounds $L_t$ and $U_t$ are monotonically increasing and decreasing respectively, that is,  $L_{t} \leq L_{t'}$ and $U_{t} \geq U_{t'}$ for $t' > t$. (Here $n$ is the size of $\mathcal{X} = \{ 1, 2, \dots, n \}$ and $\mathcal{A}$ is a subset of $\mathcal{X}$ of size less than or equal to $k$. )
\end{assumption}

These assumptions are satisfied in many settings where $F$ is too expensive to compute exactly.  For example, if $F(\mathcal{A}) = \mathbb{E}[X|\mathcal{A}]$ for some random variable $X$, then $\hat{F}(\mathcal{A}) = \frac{1}{N}(\sum_{i=1}^N x_i)$, where the $x_i$'s are i.i.d.\ samples of $X$, is an unbiased estimator of $\hat{f}(\mathcal{A})$ for which $U_t$ and $L_t$ can easily be constructed using, e.g., Hoeffding's inequality. According to Hoeffding's inequality for $x_i \in [0,1]$, 
\begin{equation}  \label{eq:Hoeffding}
\Pr(| F(\mathcal{A}) - \mathbb{E}[\hat{F}(\mathcal{A})]| \geq \epsilon) \leq 2 e^{(-2 \epsilon^{2} N)}.
\end{equation}
Using Hoeffding's inequality, $L_{t}$ and $U_{t}$ can be constructed as: with probability $1 - \frac{\delta_l}{t(t+1)}$, $L_{t}(\mathcal{A}) = \hat{F}(\mathcal{A}) - \sqrt{\frac{1}{2N} \log(\frac{2t(t+1)}{\delta_{l}})} \leq F(\mathcal{A})$ is true and that with probability $1 - \frac{\delta_u}{t(t+1)}$,  $U_{t}(\mathcal{A}) = \hat{F}(\mathcal{A}) + \sqrt{\frac{1}{2N} \log(\frac{2t(t+1)}{\delta_{u}})} \geq F(\mathcal{A})$. Furthermore, $\mathtt{tighten}$ procedure can tighten these lower and upper bounds by spending more computation, thereby, using more samples (higher $N$) to compute $\hat{F}$.
However, we specifically do \emph{not} assume access to an unbiased estimator of $Q$.  Instead, we seek an algorithm that performs submodular function maximization given only $U_{t}$, $L_{t}$, and $\mathtt{tighten}$.

The absence of an unbiased estimator of $F$ arises in many settings in which $F$ is defined using information-theoretic metrics such as \emph{information gain} or \emph{entropy}.  For example, consider the \emph{sensor selection} problem \cite{ApproxDynamProg,Spaan09icaps} in which an agent has a set of sensors $\mathcal{X} = \{1,2 \dots n\}$  giving information about a hidden state $s$. For each sensor $i$, $z_{i}$ denotes the observation the agent will receive if it selects that sensor, with $z_i = \emptyset$ if not selected. $\mathbf{z} = \langle z_{1}, z_{2} \dots z_{n} \rangle$ denotes the  complete observation vector generated by all sensors. 

Upon selecting sensors $\mathcal{A}$ and observing $\mathbf{z}$, the agent can compute a posterior belief using Bayes rule:
\begin{equation}
b_{\mathbf{z}}^{\mathcal{A}}(s) = \frac{\Pr(\mathbf{z}|s,\mathcal{A})b(s)}{\Pr(\mathbf{z}|b,\mathcal{A})},
\end{equation}
where $\Pr(\mathbf{z}|b,\mathcal{A}) = \sum_{s} b(s) \Pr(\mathbf{z}|s,\mathcal{A})$ and $b(s)$ is a prior belief.  The agent aims to minimize its uncertainty about $s$, measured as the \emph{entropy} of $b(s)$: $H_{b}(s) = - \sum_{s} b(s) \log(b(s)).$

Given $b$ and $\mathcal{A}$, the \emph{conditional entropy} is:
\begin{equation}
H_{b}^{\mathcal{A}}(s|\mathbf{z}) = \sum_{\mathbf{z} \in {\Omega}} \Pr(\mathbf{z}|b,\mathcal{A}) H_{b_{\mathbf{z}}^{\mathcal{A}}}(s),
\end{equation}
where ${\Omega}$ is the set of all possible values of $\mathbf{z}$ that can come from sensors present in the set $\mathcal{A}$.  The agent's goal is to find $\mathcal{A}^{*}$ that maximizes \emph{information gain}:
\begin{equation} \label{eq:ig}
IG_{b}(\mathcal{A}) = H_{b}(s) - H_{b}^{\mathcal{A}}(s|\mathbf{z}).
\end{equation}
Since the first term in (\ref{eq:ig}) is independent of $\mathcal{A}$, we equivalently define $F(\mathcal{A})$ as: 
\begin{equation} \label{eq:obj}
F(\mathcal{A}) = -\sum_{\mathbf{z} \in \Omega} \Pr(\mathbf{z}|b,\mathcal{A}) H_{b_{\mathbf{z}}^{\mathcal{A}}}(s).
\end{equation}

Unfortunately, when there are many possible states and actions, computing $H_{b^{\mathcal{A}}_{\mathbf{z}}}$(s) is not only intractable, but also difficult to efficiently estimate \cite{Paninski,IGestimates,Schurmann}. In fact, \citet{Paninski} showed that no unbiased estimator for entropy exists.  %

Therefore, in the next section we propose a new fundamentally different method that requires only  $U$, $L$, and $\mathtt{tighten}$.  To solve sensor selection in particular, we also need cheap anytime implementations of $U$ and $L$ for conditional entropy, which we propose in Section \ref{sec:confBounds}.

\section{Method}

In this section, we propose \emph{probably approximately correct greedy maximization}, which enables an agent to perform submodular function maximization without ever computing $F$ exactly. The main idea is to use $U$ and $L$ to prune elements that with high probability do not maximize marginal gain.

Our approach is inspired by lazy greedy maximization.  To see how, it is helpful to view lazy greedy maximization as a pruning method: terminating an iteration before the priority queue is empty effectively prunes each element whose upper bound (given by its marginal gain on the previous iteration) is lower than the maximum lower bound (given by the best marginal gain found so far on the current iteration).  

PAC greedy maximization generalizes this idea in two ways. First, it accepts arbitrary upper and lower bounds.  This makes it possible to replace the bounds used by lazy greedy maximization, which rely on exact computation of marginal gain, with cheaper ones.  Second, it uses confidence bounds instead of hard bounds.  By tolerating a small probability of error, our approach can prune more aggressively, enabling large speedups while maintaining a PAC bound.

\begin{algorithm}
\caption{$\mathtt{pac}\mathhyphen\mathtt{greedy}\mathhyphen\mathtt{max}(U,L,\mathcal{X},k,\epsilon_1,t)$}\label{alg:pac-gm}
\begin{algorithmic}
\State $\mathcal{A}^{P} \gets \emptyset$
\For{$m = 1 \ to \ k$}
\State $\mathcal{A}^{P} \gets \mathcal{A}^{P} \cup \mathtt{pac}\mathhyphen\mathtt{max}(\mathtt{tighten},\mathcal{X},\mathcal{A}^P,\epsilon_1)$
\EndFor
\State return $\mathcal{A}^{P}$
\end{algorithmic}
\end{algorithm}

Algorithm \ref{alg:pac-gm} shows the main loop, which simply adds at each iteration the element selected by the $\mathtt{pac}\mathhyphen\mathtt{max}$ subroutine. Algorithm \ref{alg:pac-m} shows this subroutine, which maintains a queue of unpruned elements prioritized by their upper bound.  In each iteration of the outer while loop, $\mathtt{pac}\mathhyphen\mathtt{max}$ examines each of these elements and prunes it if its upper bound is not at least $\epsilon_1$ greater than the max lower bound found so far.  In addition, the element with the max lower bound is never pruned. If an element is not pruned, then its bounds are tightened.  Algorithm \ref{alg:pac-m} terminates when only one element remains or when the improvement produced by tightening $U$ and $L$ falls below a threshold $t$.

\begin{algorithm}
\caption{$\mathtt{pac}\mathhyphen\mathtt{max}(\mathtt{tighten},\mathcal{X},\mathcal{A}^P,\epsilon_1)$}\label{alg:pac-m}
\begin{algorithmic}[1]
\State \hspace{-10mm} \Comment{Input: $\mathtt{pac}\mathhyphen\mathtt{max}$ takes as input access to $\mathtt{tighten}$ procedure; $\mathcal{X} = \{1, 2, \dots, n \}$ original set of $n$ elements; $\mathcal{A}^P$ a subset of $\mathcal{X}$, in this case, $\mathcal{A}^P$ is the partial solution maintained by $\mathtt{pac}\mathhyphen\mathtt{greedy}\mathhyphen\mathtt{max}$, $\epsilon$ a positive real number. }
\vspace{2mm}
\State $i^P \gets 0$       \hspace{10mm}           \Comment{element with max lower bound}
\State $\rho \gets$ empty priority queue 
\State $t \gets 0$   \hspace{30mm} \Comment{ $t$ is the iteration number.} 
\vspace{2mm}
\State $t = t + 1$
\For{$i \in \mathcal{X} \setminus \mathcal{A}^P$}
\State $U_{t}(i), L_{t}(i) \gets \mathtt{tighten}(\mathcal{A}^{P} \cup i, t)$  \\ \\ \Comment{ Here $U_{t}(i)$ and $L_{t}(i)$  denote the  upper and lower  bound on  $F(\mathcal{A}^P \cup i)$.} \\
\vspace{2mm}
\State $\rho.\mathtt{enqueue}(i,U_{t}(i))$         
\State $i^P \gets \argmax_{j\in\{i,i^P\}} L_{t}( j)$  
\EndFor
 \While{$(\rho.\mathtt{length}() > 1)$}
 \State $\rho' \gets$ empty priority queue
 \State $t = t + 1$
 \State $i^P$-in-queue = True   \\ \Comment{Set flag to check if $i^P$ is still in $\rho$}
 \While{$\neg\rho.\mathtt{empty}()$}
 \State $i \gets \rho.\mathtt{dequeue}()$   
 \If{$i = i^P$} \\
 \State $i^P$-in-queue = False
 \EndIf
 \vspace{1mm}
 \If{$(i = i^P) \vee (U_{t}(i) \geq L_{t}(i^P) + \epsilon_1)$} 
 \vspace{2mm}
 \State $U_{t}(i), L_{t}(i) \gets \mathtt{tighten}(\mathcal{A}^{P} \cup i, t)$  
 \State $i^P \gets \argmax_{j\in\{i,i^P\}} L_{t}(j)$  
 \State $\rho'.\mathtt{enqueue}(i,U_{t}(i))$
 \ElsIf{$i^P\mbox{-in-queue = True}$}
 \State Continue
 \Else 
 \State Break Inner While Loop
 \EndIf
 \EndWhile
 \State $\rho \gets \rho'$
 \EndWhile
 \State return $i^P$
\end{algorithmic}
\end{algorithm}

Algorithm \ref{alg:pac-m} is the closely related to best-arm identification algorithms \cite{bestarm} for multi-armed bandits. Spcifically, \ref {alg:pac-m} is closes to the algorithm Hoeffding's races, presented in \cite{Maron94,Maron97} except that \cite{Maron94,Maron97} propose explicitly to use Hoeffding's inequality to compute and tighten the upper and lower confidence bound\footnote{The other \emph{minute} differences between $\mathtt{pac}\mathhyphen\mathtt{max}$ and Hoeffding's races are (a) the use of priority queue in $\mathtt{pac}\mathhyphen\mathtt{max}$, and (b) that Hoeffding's races do(es) not \emph{explicitly} take into account the number of times $\mathtt{tighten}$ procedure was previously called as an input parameter to the $\mathtt{tighten}$ procedure.}. Consequently, the analysis and convergence of the algorithms that they present are reliant on the application of Hoeffding's inequality and, thus, are applicable only for functions that can be estimated in an unbiased manner. This is in contrast to Algorithm \ref{alg:pac-m} and its analysis that is given in the later section, both of which do not necessarily make any assumption on the way in which the upper and lower confidence bounds are generated or tightened.

\section{PAC Bounds}
In this section, we analyze PAC greedy maximization. With oracle access to $F$, greedy maximization is guaranteed to find $\mathcal{A}^{G}$ such that $F(\mathcal{A}^{G}) \geq (1 - e^{-1})F(\mathcal{A^{*}})$, if $F$ is monotone, non-negative and submodular \cite{nemhauser}. Since PAC greedy maximization does not assume oracle access to $F$ and instead works with cheap anytime confidence bounds on $F$, we prove a PAC bound for PAC greedy maximization. In particular, we prove that under the same conditions, PAC greedy maximization finds a solution $\mathcal{A}^{P}$ such that, with high probability, $F(\mathcal{A}^{P})$ is close to $F(\mathcal{A}^{*})$.

We can now prove a lemma that shows that, with high probability, the marginal gain of the element picked by $\mathtt{pac}\mathhyphen\mathtt{max}(U,L,\mathcal{A},\epsilon_1)$ is at least nearly as great as the average marginal gain of the elements not in $\mathcal{A}$.

\begin{lemma}  \label{lem:pacLemma}
Let $\mathcal{X} = \{1, 2, \dots, n \}$, $\mathcal{A}^+ = \{\mathcal{A} \subseteq \mathcal{X} : |\mathcal{A} | \leq k \}$, and $F: 2^{\mathcal{X}} \to \mathbb{R}_{+}$. If $\mathtt{pac}\mathhyphen\mathtt{max}(\mathtt{tighten},\mathcal{X}, \mathcal{A},\epsilon_1)$ terminates and returns $i^{P}$, and if Assumption \ref{as:zero} holds, then with probability (at least) $1 - \delta$, 
\begin{equation}
F(\mathcal{A}^P \cup i^P) \geq F(\mathcal{A}^P \cup i^*) - \epsilon_1,
\end{equation}
where $\delta_{1} = \delta_u + \delta_{l}$ and  $i^* = \arg\max_{i \in \mathcal{X} \setminus \mathcal{A}^{P}} F(\mathcal{A}^{P} \cup i)$ and $\mathcal{A}^{P}$ is any set in $\mathcal{A}^+$.
\end{lemma}

\begin{proof}
If $\mathtt{pac}\mathhyphen\mathtt{max} $ returns $i^P = i^*$, then the Lemma holds trivially, since 
\begin{equation}
F(\mathcal{A}^P \cup i^*) \geq F(\mathcal{A}^P \cup i^*) - \epsilon_1. 
\end{equation}

For the case that $\mathtt{pac}\mathhyphen\mathtt{max}$ returns $i^P \neq i^*$, we provide the proof here.

Lets assume that $\mathtt{pac}\mathhyphen\mathtt{max}$ returns $i^P$ after $T$ ($T$ not known or fixed) total iterations. That is, $t$ goes from 0 to $T$.

We prove this Lemma in two parts: 
\begin{itemize}
\item In part A, we show that if the assumed confidence intervals $U_{t}(i)$ and $L_{t}(i)$ hold for all $t$ and $i$, then $\mathtt{pac}\mathhyphen\mathtt{max}$ returns an $\epsilon$-optimal element. 
That is, if 
\begin{equation}
U_{t}(i) \geq F(\mathcal{A}^P \cup i)    
\end{equation}
and
\begin{equation}
L_{t}(i) \leq F(\mathcal{A}^P \cup i)
\end{equation}
is true for all $i \in \mathcal{X}$ and $t \in \{1, 2, \dots, T\}$, then
\begin{equation}
F(\mathcal{A}^P \cup i^P) \geq F(\mathcal{A}^P \cup i^*) - \epsilon_1. 
\end{equation}
\item In part B, we compute the probability that the confidence intervals hold for all $i$ and $t$.

We show that this probability is lower bounded by $1 - \delta_{1}$ if Assumption \ref{as:zero} holds. Here $\delta_1 = \delta_l + \delta_u$ is the probability that the confidence intervals (UCI or LCI) are not true at least once in $T$ iterations for at least one $i$. 
\end{itemize}

\textbf{Part A}: To show, if for all $i$ and $t$, confidence intervals hold, that is, $U_{t}(i) \geq F(\mathcal{A}^{P} \cup i) \geq L_{t}(i)$ is true for all $i,  t$, then 
\begin{equation}
F(\mathcal{A}^P \cup i^P) \geq F(\mathcal{A}^P \cup i^*) - \epsilon_1.
\end{equation}

At any iteration $t \in \{1, 2, \dots, T \}$ $\mathtt{pac}\mathhyphen\mathtt{max}$ maintains the element with max lower bound. Lets denote the element with max lower bound at the end of iteration $t$ by $i^{P}_{t}$.  Since $i^*$ was eliminated ($i^P \neq i^*$), thus at some iteration $t'$, its upper bound was lower than the maximum lower bound + $\epsilon_1$ (lets say of element $i^P_{t'}$). Let $L_{t'}(i)$ denote the lower bound (and $U_{t'}(i)$ denote the upper bound) at iteration $t'$ of element $i$, then, the lower bound of element $i^P_{t'}$ is greater than the upper bound of $i^*$ minus $\epsilon$ at some iteration $t'$: 
\begin{equation}  \label{pr7:eq0}
L_{t'}(i^P_{t'}) \geq  U_{t'}(i^*) - \epsilon_1. 
\end{equation}

Since (a) we have assumed that $L_{t}$ is monotonically increasing, and (b) $\mathtt{pac}\mathhyphen\mathtt{max}$ returns $i^P$, this implies on termination the element with maximum lower bound is $i^P$, and this lower bound on $i^P$ has to be greater than $L_{t'}(i^P_{t'})$, since $i^P$ was able to replace $i^P_{t'}$ at an iteration $t > t'$. 
\begin{equation}  \label{pr7:eq2}
L_{T}(i^P)  \geq L_{t'}(i^P_{t'})
\end{equation}

Combining \eqref{pr7:eq0}, and \eqref{pr7:eq2}, we get, 
\begin{equation}
L_{T}(i^P) \ge L_{t'}(i^P_{t'}) \ge U_{t'}(i^*) - \epsilon_1
\end{equation}

If confidence interval hold for all $t$ and $i$, then $ U_{t'}(i^*) \geq F(\mathcal{A}^P \cup i^*)$  and $F(\mathcal{A}^P \cup i^P) \geq L_{T}(i^P)$, this implies, 
\begin{equation}
F(\mathcal{A}^P \cup i^P) \ge L_{T}(i^P) \ge U_{t'}(i^*) - \epsilon_1 \ge F(\mathcal{A}^P \cup i^*) - \epsilon_1 .
\end{equation}

Thus, if confidence intervals $U_{t}(i)$ and $L_{t}(i)$ hold for all $t$ and $i$, then, 
\begin{equation}
F(\mathcal{A}^P \cup i^P) \geq F(\mathcal{A}^{P} \cup i^*) - \epsilon.
\end{equation}

\textbf{Part B}: In part B, we compute the probability that the upper and lower confidence intervals hold for all $t$ and $i$. The reasoning in this part follows from the proof presented in \cite{Maron94,Maron97}. 

We will be using extensively the union bound during this part of the proof. According to union bound the probability of the union of events $A_1, A_2, \dots, A_l$ is bounded by the sum of their individual probabilities: 
\begin{equation}  \label{eq:unionBound}
\Pr(A_1 \vee A_2 \vee \dots \vee A_l) = \Pr( \bigcup_{l} A_l) \leq \sum_{l} \Pr(A_l) 
\end{equation}

To compute the probability that the confidence intervals hold for all $i$ for all $t$, we observe that this probability is equal to 1 - the probability that the confidence intervals do not hold for at least one $i$ during at least one iteration $t$. So we want to compute the probability:
$\Pr$({\small{upper confidence interval (UCI) OR lower}}  \small{confidence interval (LCI) do not hold for at least one $i$ for at least one value of $t$ }).

To compute this probability, lets start with the probability of the confidence interval to NOT hold for one particular $i = i'$ at one particular iteration $t = t'$. We have assumed that at iteration $t$, $\mathtt{tighten}(\mathcal{A}^P \cup i, t)$ returns $U_{t}(i)$ (and $L_{t}(i)$) such that with probability $1 - \frac{\delta_u}{n t(t + 1)}, U_{t}(i) \geq F(\mathcal{A}^P \cup i)$ (this condition means that upper confidence interval holds) is true. This implies that for a particular $i = i'$ at iteration $t = t'$, the probability of upper confidence interval to not hold is (less than) $\frac{\delta_u}{n t'(t' + 1)} $ and the probability that lower confidence interval ($L_{t}(i) \leq F(\mathcal{A}^P \cup i)$) does not hold is (less than) $\frac{\delta_l}{nt'(t'+1)}$. 

By Assumption \ref{as:zero}, 
\begin{equation}
\Pr(\mbox{UCI is not true for $i'$ at iteration $t'$ }) \leq \frac{\delta_u}{n t'(t' + 1)}, 
\end{equation}
and
\begin{equation}
\Pr(\mbox{LCI is not true for $i'$ at iteration $t'$ }) \leq \frac{\delta_l}{n t'(t' + 1)}, 
\end{equation}

Thus, using union bound,
\begin{align}  \label{eq:oneionet}
\Pr\Big(\mbox{ \small UCI OR LCI is not true for $i'$ at iteration $t'$ } \Big) \nonumber \\ \leq \frac{\delta_u + \delta_l}{n t' (t' + 1)}.
\end{align}

Again using union bound, probability that confidence intervals do NOT hold for $i'$  at $t=1$ OR $t=2$ OR $t=3$ OR $\dots$ OR $t=T$ is bounded by sum of individual (probability that confidence intervals do NOT hold for $i'$ for $t = 1$) + (probability that confidence intervals do NOT hold for $i'$ for $t = 2) +  \dots$ (series ends at $t=T$).

From equation \eqref{eq:oneionet}, we know the probability that confidence intervals do not hold for $i'$ at iteration $t'$ is less than $\frac{\delta_u + \delta_l}{nt' (t' + 1)}$. 
\begin{align}
\Pr({\small(\mbox{UCI or LCI is not true for $i'$ at least once in } t \in \{1, 2, \dots, T\} }) \nonumber  \\ \leq \sum_{t= 1}^{T}  \frac{\delta_l + \delta_u}{nt(t+1)}
\end{align}

The sum over $t$ of the series $\frac{1}{t(t+1)}$, that is $S_{T} = \sum_{t=1}^{T} \frac{1}{t(t+1)}$ is bounded by $(1 - \frac{1}{T+1})$ for a finite $T$ and even as $\lim{T \to \infty}$, $\lim_{T \to \infty} \sum_{t=1}^{T} \frac{1}{t(t+1)}$ is bounded by 1\footnote{$\sum_{t=1}^{T} \frac{1}{t(t+1)}$ can be expressed as: 
\begin{align}
\sum_{t=1}^{T} \frac{1}{t(t+1)} &= \sum_{t=1}^{T} [\frac{1}{t} - \frac{1}{t+1}] \hfill \\
&= [1 - \frac{1}{2} + \frac{1}{2} - \frac{1}{3} + \frac{1}{3} - \dots - \frac{1}{T+1}] \\
& = [1 - \frac{1}{T+1}].
\end{align} 
}.

Thus, using union bound,
\begin{align}
\Pr({\small(\mbox{UCI or LCI is not true for $i'$ at least once in } t \in \{1, 2, \dots, T\} }) \nonumber \\ \leq \frac{\delta_u + \delta_l}{n}
\end{align}

Again we can use union bound to show that the probability that the confidence intervals do not hold for $i=1$ OR $i=2$ OR $\dots$ OR $i=n$, at least once in $t \in \{1, 2, \dots, T\}$ is bounded by the (probability that the confidence interval do not hold for  $i=1$ at least once in $t \in \{1, 2, \dots, T\}$) + (probability that the confidence interval do not hold for  $i=2$ at least once in $t \in \{1, 2, \dots, T\}$) + $\dots$ + (probability that the confidence interval do not hold for  $i=n$ at least once in $t \in \{1, 2, \dots, T\}$) . 

Since for each $i$ the probability that the confidence interval do not hold for $i$ at least once in $t \in \{1, 2, \dots, T\}$ is bounded by $\frac{\delta_u + \delta_l}{n}$. Taking the sum over $n$ terms yields: 
\begin{align}
\Pr({\scriptsize \mbox{UCI or LCI is not true for at least one $i$ at least once in } t \in \{1, 2, \dots, T\} }) \nonumber \\ \leq \delta_u + \delta_l.
\end{align}

Finally, since probability that confidence intervals hold for all $i$ for all $t \in \{1, 2, \dots, T \}$ = 1 - probability confidence intervals do not hold at least for one $t \in \{1, 2, \dots, T \}$ for at least one $i$, we can write that with probability $1 - \delta_1$, 
\begin{equation}
F(\mathcal{A}^P \cup i^P) \geq F(\mathcal{A}^P \cup i^*) - \epsilon_1.
\end{equation}
\end{proof}

Next, we show that, if in each iteration of greedy maximization an $\epsilon$-optimal element is returned with probability $1- \delta$, then greedy maximization returns a set that is $k\epsilon$-optimal with probability $1 - k\delta$, where $k$ is the number of iteration greedy maximization is run for. 

\begin{theorem} \label{th:main1}
Let $\mathcal{X} = \{1, 2, \dots n \}$,  $\mathcal{A}^{+} : \{ \mathcal{A} \subseteq \mathcal{X} : |\mathcal{A}| \leq k \}$, and $F : 2^{\mathcal{X}} \to \mathbb{R}_{+}$ be a non-negative, monotone and submodular in $\mathcal{X}$. if Assumption \ref{as:zero} holds and if $\mathtt{pac}\mathhyphen\mathtt{max}$ terminates every time it is called then, with probability $ 1 - \delta$, 
\begin{equation} \label{eq:thmain}
F(\mathcal{A}^{P}) \geq (1 - e^{-1}) F(\mathcal{A}^{*}) - \epsilon,
\end{equation}
where $\mathcal{A}^{P} = \mathtt{pac}\mathhyphen\mathtt{greedy}\mathhyphen\mathtt{max}(\mathtt{tighten},\mathcal{X},k, \epsilon_1)$, $\mathcal{A}^{*} = \arg\max_{\mathcal{A} \in \mathcal{A}^{+}} F(\mathcal{A})$, $\delta = k \delta_{1}$, and $\epsilon = k \epsilon_1$. (Here $\delta_1 = \delta_l + \delta_u$, and $\delta_l$ and $\delta_u$ are defined in Assumption \ref{as:zero}.)
\end{theorem}
\begin{proof}
Let $\mathcal{A}^{P}_{m}$ denote the subset returned by $\mathtt{pac}\mathhyphen\mathtt{greedy}\mathhyphen\mathtt{max}$ after $m$ iterations, that is, $\mathcal{A}^{P}_{m} = \mathtt{pac}\mathhyphen\mathtt{greedy}\mathhyphen\mathtt{max}(\mathtt{tighten}, \mathcal{X}, m,\epsilon_1)$ and let $\{i_{1}^*, i_{2}^*, \dots i_{k}^{*} \}$ (arbitrary order),  be the $k$ elements of $\mathcal{A}^*$. 
We denote the marginal gain of adding $i$ to a subset $\mathcal{A}$ as: 
\begin{equation}  \label{eq:marginalGain}
\Delta_{F}(i|\mathcal{A})  = F(\mathcal{A} \cup i) - F(\mathcal{A}). 
\end{equation}

\vspace{5mm}

To prove Theorem \ref{th:main1}, we first prove an intermediate result that we will use later in the proof: 
Starting with the statement of Lemma \ref{lem:pacLemma}, with probability $1 - \delta_{1}$, \
\begin{equation}
F(\mathcal{A}^P_{m} \cup i^P) \geq F(\mathcal{A}^P_{m} \cup i^*) - \epsilon_{1}, 
\end{equation}
where $i^* = \arg\max_{i \in \mathcal{X} \setminus \mathcal{A}^{P}_m} F(\mathcal{A}^{P}_m \cup i)$ and $i^P = \mathtt{pac}\mathhyphen\mathtt{max}(\mathtt{tighten}, \mathcal{X}, \mathcal{A}^P_{m}, \epsilon_1)$.

This implies the following set of inequalities the explanation of which is provided after them: with probability $1 - \delta_{1}$, 
\begin{align}
&F(\mathcal{A}^P_{m} \cup i^P) \geq F(\mathcal{A}^P_{m} \cup i^*) - \epsilon_{1}  \label{th12:p1:e1}\\ 
&F(\mathcal{A}^P_{m} \cup i^P) - F(\mathcal{A}^{P}_{m}) \geq Q(\mathcal{A}^P_{m} \cup i^*) -Q(\mathcal{A}^P_{m}) - \epsilon_{1} \label{th12:p1:e2}\\ 
&\Delta_{F}(i^P | \mathcal{A}^P_{m}) \geq \Delta_{F}(i^* |\mathcal{A}^P_{m}) - \epsilon_{1} \label{th12:p1:e3} \\
& \Delta_{F}(i^P | \mathcal{A}^P_{m}) \geq \frac{1}{|\mathcal{A}^* \setminus \mathcal{A}^P_{m}|}  \sum_{i \in \mathcal{A}^* \setminus \mathcal{A}^P_{m}} \Delta_{F}(i | \mathcal{A}^P_{m}) - \epsilon_{1} \label{th12:p1:e4} \\
& \Delta_{F}(i^P | \mathcal{A}^P_{m}) \geq \frac{1}{k} \sum_{i \in \mathcal{A}^* \setminus \mathcal{A}^P_{m}} \Delta_{F}(i | \mathcal{A}^P_{m}) - \epsilon_{1}  \label{th12:p1:e5}. 
\end{align}

Eq. \eqref{th12:p1:e1} follows from Lemma \ref{lem:pacLemma}, Eq. \eqref{th12:p1:e2} is simple subtraction of $Q(\mathcal{A}^P_{m})$ from both sides of inequality, Eq. \eqref{th12:p1:e3} is re-writing \eqref{th12:p1:e2} by using the definition of marginal gain as given in \eqref{eq:marginalGain} (and in Chapter 2), Eq. \eqref{th12:p1:e4} is true because $\Delta_{F}(i^* |\mathcal{A}^P_{m})$ is the maximum value of $\Delta_{F}( i |\mathcal{A}^P_{m})$ for all $i \in \mathcal{X} \setminus \mathcal{A}^P_{m}$. This implies it is definitely bigger than the average value of $\Delta_{F}(i|\mathcal{A}^P_{m})$ taken over $\mathcal{A}^* \setminus \mathcal{A}^P_{m}$ where $\mathcal{A}^*$ is a subset of $\mathcal{X}$. Eq. \eqref{th12:p1:e5} is true because $|\mathcal{A}^*| \leq k$.

\vspace{5mm}

The rest of the proof follows the same logic as the proof presented in \cite{submodularity} for Nemhauser's original result on greedy maximization of submodular functions.

We present the following sets of inequalities and then provide the explanations for them below: 
\begin{align}
F(\mathcal{A}^*) & \leq F(\mathcal{A}^* \cup \mathcal{A}^{P}_{m})  \label{th12:eq1} \\
&= F(\mathcal{A}^P_{m}) + \sum_{j = 1}^{k} \Delta_{F}(i^{*}_{j} | \mathcal{A}^P_{m} \cup \{i_{1}^{*}, i_{2}^*, \dots, i_{j-1}^* \} ) \label{th12:eq2}  \\
& \leq F(\mathcal{A}^P_{m}) + \sum_{i \in \mathcal{A}^* \setminus \mathcal{A}^P_{m}} \Delta_{F} (i | \mathcal{A}^{P}_{m}) \label{th12:eq3} 
\end{align}
Equation \eqref{th12:eq1} follows from monotonicity of $Q$, Eq. \eqref{th12:eq2} is a straightforward telescopic sum, Eq. \eqref{th12:eq3} is true because $Q$ is submodular.
\vspace{5mm}

Eq. \eqref{th12:p1:e5} says that with probability $1 - \delta_{1}$, 
\begin{equation} \label{eq:p1:e5p2}
k (\Delta_{F}(i^P | \mathcal{A}_{m}^P) + \epsilon_{1}) \geq \sum_{i \in \mathcal{A}^* \setminus \mathcal{A}^P_{m}} \Delta_{F} (i | \mathcal{A}^{P}_{m}).
\end{equation}
Using \eqref{eq:p1:e5p2}, \eqref{th12:eq3} can be written as: 

With probability $1 - \delta_{1}$,
\begin{align}
F(\mathcal{A}^*) & \leq F(\mathcal{A}^P_{m}) + k ( \Delta_{F}(i^P | \mathcal{A}^P_{m}) + \epsilon_1) \label{th12:eq4} \\
& \leq F(\mathcal{A}^P_{m}) + k(F(\mathcal{A}^P_{m} \cup i^P) - F(\mathcal{A}^P_{m}) + \epsilon_{1}) \label{th12:eq5} \\
& \leq F(\mathcal{A}^P_{m}) + k (F(\mathcal{A}^P_{m+1}) - F(\mathcal{A}^P_{m}) + \epsilon_{1} )  \label{th12:eq6} 
\end{align}

 Eq. \eqref{th12:eq4} follows from \eqref{th12:eq3}, (we just replaced $\sum_{i \in \mathcal{A}^* \setminus \mathcal{A}^P_{m}} \Delta_{Q} (i | \mathcal{A}^{P}_{m})$ with a greater quantity $k ( \Delta_{Q}(i^P | \mathcal{A}^P_{m}) + \epsilon_1)$). Eq \eqref{th12:eq5} follows from the definition of marginal gain in \eqref{eq:marginalGain} and Eq. \eqref{th12:eq6} is true because $\mathcal{A}^P_{m} \cup i^P$ is $\mathcal{A}^P_{m+1}$ by definition of $\mathcal{A}^P_{m}$ and $i^P$ at the start of the proof. 

Lets define $\beta_{m} = F(\mathcal{A}^*) - F(\mathcal{A}^P_{m})$, then \eqref{th12:eq6} can be written as: with probability $1 - \delta_{1}$, 
\begin{align}
F(\mathcal{A}^*) - F(\mathcal{A}^P_{m}) & \leq \nonumber \\ k[ F(\mathcal{A}^*) - F(\mathcal{A}_{m}^{P}) & - (F(\mathcal{A}^*) - F(\mathcal{A}^P_{m+1}) + \epsilon_{1}]  \label{th12:eq7} \\
\beta_{m} & \leq k [\beta_m - \beta_{m+1} + \epsilon_{1}]  \label{th12:eq8} \\ 
\beta_{m+1} & \leq \beta_m (1 - \frac{1}{k}) + \epsilon_{1} \label{th12:eq9} .
\end{align}

Substituting $m = 0$ in \eqref{th12:eq9} gives, with probability $1 - \delta_{1}$, 
\begin{equation}  \label{rec:eq1}
\beta_1 \leq (1 - \frac{1}{k}) \beta_0 + \epsilon_1
\end{equation}

Substituting $m = 1$ in \eqref{th12:eq9} again gives, with probability $1 - \delta_{1}$, 
\begin{equation}  \label{rec:eq2}
\beta_{2} \leq (1 - \frac{1}{k}) \beta_{1} + \epsilon_{1}
\end{equation}

Now combining \eqref{rec:eq1} and \eqref{rec:eq2} and using union bound as presented in equation \eqref{eq:unionBound} (explained below): with probability $1 - 2 \delta_{1}$, 
\begin{align}
\beta_{2} & \leq (1 - \frac{1}{k}) \Big[ (1 - \frac{1}{k}) \beta_{0} + \epsilon_{1} \Big] + \epsilon_{1}   \label{rec:eq3}\\
&\leq (1 - \frac{1}{k})^{2} \beta_0 + (2 - \frac{1}{k}) \epsilon_{1} \label{rec:eq4}\\
& \leq (1 - \frac{1}{k})^2 \beta_0 + 2 \epsilon_1.  \label{rec:eq5} 
\end{align}

The logic behind using union bound here is that the inequality in \eqref{rec:eq1} can fail with probability $\delta_{1}$ and the inequality in \eqref{rec:eq2} can fail with probability $\delta_1$. If both the inequalities in \eqref{rec:eq1} and \eqref{rec:eq2} do not fail then \eqref{rec:eq3} (and consequently \eqref{rec:eq5} is definitely true. The probability that either of inequality in \eqref{rec:eq1} or \eqref{rec:eq2} fails is bounded by their sum of the probabilities that either inequality fails individually: $\delta_{1} + \delta_{1}$. The probability of both inequality in \eqref{rec:eq1} and \eqref{rec:eq2} is true is $1 - 2 \delta_1$. 

Substituting $m = 2$ in \eqref{th12:eq9} again gives, with probability $1 - \delta_{1}$, 
\begin{equation}  \label{rec:eq6}
\beta_{3} \leq (1 - \frac{1}{k}) \beta_{2} + \epsilon_{1}
\end{equation}

Combining \eqref{rec:eq6} with \eqref{rec:eq5}, and using union bound (we just explained how to use union bound here in the paragraph above \eqref{rec:eq6}), with probability $ 1 - 3 \delta_{1}$, 
\begin{align}
\beta_{3} & \leq (1 - \frac{1}{k}) \beta_{2} + \epsilon_{1} \label{rec:eq7} \\
& \leq (1 - \frac{1}{k}) \Big[ (1 - \frac{1}{k})^2 \beta_0  + 2 \epsilon_{1} \Big] + \epsilon_1\label{rec:eq8} \\
& \leq (1 - \frac{1}{k})^{3} \beta_{0} + 3 \epsilon_{1}. \label{rec:eq9}
\end{align}

Continuing like this for $m = 0$ to $k-1$, we get, with probability $1 - k \delta_{1}$, 
\begin{equation}
\beta_{k} \leq (1 - \frac{1}{k})^{k} \beta_{0} + k \epsilon_{1}
\end{equation}

Now using the inequality that $1 - x \leq e^{-x}$ for all $x \in \mathbb{R}$, we get $1 - \frac{1}{k} \leq e^{\frac{-1}{k}}$, which implies, with probability $1 - k \delta_{1}$,
\begin{equation}
\beta_{k} \leq e^{\frac{-k}{k}} \beta_{0} + k \epsilon_{1}
\end{equation}

Using definition of $\beta_{k} = F(\mathcal{A}^*) - F(\mathcal{A}^P) $ and $\beta_{0} = F(\mathcal{A}^*) - F(\mathcal{A}^P_{0})$, with probability $1 - k \delta_{1}$, 
\begin{equation}
 F(\mathcal{A}^*) - F(\mathcal{A}^P) \leq (e^{-1}) [F(\mathcal{A}^*) - F(\mathcal{A}^P_{0})] + k\epsilon_{1}
\end{equation}
Since $F(\mathcal{A}^P_{0}) > 0$, with probability $1 - k \delta_{1}$, 
\begin{align} 
& F(\mathcal{A}^*) - F(\mathcal{A}^P) \leq ( e^{-1}) [F(\mathcal{A}^*)] + k\epsilon_{1} \\
 & F(\mathcal{A}^P) \geq (1 - e^{-1}) F(\mathcal{A}^*) - k\epsilon_{1}
\end{align}

\end{proof}

Theorem \ref{th:main1} proves that PAC greedy maximization, while assuming access only to anytime confidence bounds on $F$, computes $\mathcal{A}^{P}$ such that with high probability $F(\mathcal{A}^{P})$ has bounded error with respect to $F(\mathcal{A}^{*})$. As PAC greedy maximization requires access to cheap upper and lower confidence bounds, in the next section, we propose such bounds for conditional entropy.

\section{Conditional Entropy Bounds} \label{sec:confBounds}

In many settings, $U$ and $L$ can easily be constructed using, e.g., Hoeffding's inequality \cite{Hoeffding63} and $\mathtt{tighten}$ need only fold more samples into an estimate of $F$.  However, Hoeffding's inequality only bounds the error between the estimate and the expected value of the estimator. This in turn bounds the error between the estimate and the true value only if the estimator is unbiased, i.e., the expected value of the estimator equals the true value.

 We are interested in settings such as sensor selection, where $F$ is based on conditional entropy, which is computed by approximating the entropy over a posterior belief. This entropy cannot be estimated in an unbiased way \cite{Paninski}.  Therefore, in this section, we propose novel, cheap confidence bounds on conditional entropy.

We start by defining the maximum likelihood estimate of entropy. Given $M$ samples, $\{s^1, s^{2} \dots s^M\}$ from a discrete distribution $b(s)$, the \emph{maximum likelihood estimator} (MLE) of $b(s)$ is:
\begin{equation} \label{eq:mleent}
\hat{b}(s) = \frac{1}{M} \sum_{j = 1}^{M} \mathds{1}(s^{j},s),
\end{equation}
where $\mathds{1}(s^{j},s)$ is an indicator function that is 1 if $s^{j} = s$ and 0 otherwise.
The MLE of entropy is:

\begin{equation}
{H}_{\hat{b}}(s) = \sum_{s} \hat{b}(s) \log(\hat{b}(s)).
\end{equation}

Though ${H}_{\hat{b}}(s)$ is known to be biased, \citet{Paninski} established some useful properties of it.
\begin{theorem} \label{th:entropyEst}
(Paninski 2003)
\begin{equation} \label{eq:probBound}
(a) \ \Pr(|{H}_{\hat{b}}(s) - \mathbb{E}[{H}_{{\hat{b}}}(s)~|~b]|\geq \eta) \leq \delta_{\eta}, 
\end{equation}
where $\delta_{\eta} = 2e^{\frac{-M}{2}\eta^{2}(\log(M))^{-2}}$.
\begin{equation} \label{eq:biasBound}
(b) \ \mu_{M}(b) \leq \mathbb{E}[{H}_{{\hat{b}}}(s)~|~b]- H_{{b}}(s)\leq 0,
\end{equation}
where $\mu_{M}(b) = -\log(1 + \frac{\psi_b(s) - 1}{M})$ and $\psi_{b}(s)$ is the support of $b(s)$. 
\end{theorem}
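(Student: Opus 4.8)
The statement has two parts of genuinely different character, so the plan is to treat each with the tool suited to it: part $(a)$ is a concentration inequality and part $(b)$ is a pair of bias bounds, with the second bound being the real work.

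For $(a)$ the plan is to regard $H_{\hat{b}}(s)$ as a function $g(s^1,\dots,s^M)$ of the $M$ i.i.d.\ samples and apply McDiarmid's bounded-differences inequality. First I would establish that altering a single sample $s^j$ changes $H_{\hat{b}}$ by at most $c:=\tfrac{2\log M}{M}$. The point is that changing one sample moves a unit of count from one histogram bin to another, so at most two of the $\hat{b}(s)$ change, each by $\pm\tfrac1M$; writing $\gamma(x)=-x\log x$, the induced change is $[\gamma(\tfrac{k-1}{M})-\gamma(\tfrac{k}{M})]+[\gamma(\tfrac{j+1}{M})-\gamma(\tfrac{j}{M})]$, and each bracket is bounded in absolute value by $\max_{1\le k\le M}|\gamma(\tfrac{k}{M})-\gamma(\tfrac{k-1}{M})|\le\tfrac{\log M}{M}$, giving $c=\tfrac{2\log M}{M}$. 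McDiarmid then yields $\Pr(|H_{\hat{b}}-\mathbb{E}[H_{\hat{b}}\mid b]|\ge\eta)\le 2\exp\!\big(-2\eta^2/(Mc^2)\big)=2\exp\!\big(-M\eta^2/(2(\log M)^2)\big)=\delta_\eta$, exactly as claimed. The one delicate point here is the per-bin increment bound near the boundary: $\gamma$ has unbounded derivative at $0$, so a mean-value estimate fails, and the extremal case is a bin dropping from count $1$ to $0$, which I would bound directly as $\gamma(\tfrac1M)-\gamma(0)=\tfrac{\log M}{M}$.

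For $(b)$ the plan is to derive both inequalities from the identity $H_b(s)-\mathbb{E}[H_{\hat{b}}(s)\mid b]=\mathbb{E}[D_{\mathrm{KL}}(\hat{b}\,\|\,b)\mid b]$, which follows by expanding $D_{\mathrm{KL}}(\hat{b}\,\|\,b)=-H_{\hat{b}}-\sum_s\hat{b}(s)\log b(s)$ and using $\mathbb{E}[\hat{b}(s)\mid b]=b(s)$, since the MLE is unbiased for the probabilities themselves. Nonnegativity of KL divergence then immediately gives the upper bias bound $\mathbb{E}[H_{\hat{b}}(s)\mid b]-H_b(s)\le 0$ (equivalently, this is just concavity of entropy plus Jensen). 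For the lower bound I would upper-bound the expected KL by applying Jensen twice. Concavity of $\log$ with the expectation taken over $s\sim\hat{b}$ gives
\[
D_{\mathrm{KL}}(\hat{b}\,\|\,b)=\mathbb{E}_{s\sim\hat{b}}\!\left[\log\tfrac{\hat{b}(s)}{b(s)}\right]\le\log\sum_s\tfrac{\hat{b}(s)^2}{b(s)},
\]
and a second application of Jensen over the samples gives $\mathbb{E}[D_{\mathrm{KL}}(\hat{b}\,\|\,b)\mid b]\le\log\mathbb{E}\big[\sum_s\tfrac{\hat{b}(s)^2}{b(s)}\mid b\big]$. Finally $\mathbb{E}[\hat{b}(s)^2\mid b]=b(s)^2+\tfrac{b(s)(1-b(s))}{M}$, so the argument of the logarithm collapses to $1+\tfrac{\psi_b(s)-1}{M}$, yielding $\mathbb{E}[H_{\hat{b}}(s)\mid b]-H_b(s)\ge-\log\!\big(1+\tfrac{\psi_b(s)-1}{M}\big)=\mu_M(b)$.

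The step I expect to be the main obstacle is precisely this logarithmic lower bound. A naive term-by-term argument (a tangent-line bound on $\log$, or bounding $D_{\mathrm{KL}}$ by the $\chi^2$ divergence) only produces the cruder $-\tfrac{\psi_b(s)-1}{M}$, which is strictly weaker than the claimed $\mu_M(b)$ because $\log(1+x)\le x$. Obtaining the sharper $\log$ form depends on keeping the entire divergence inside a single logarithm via the two-stage Jensen argument \emph{before} taking expectations over the data, so the order of the two Jensen applications is what makes the bound come out right.
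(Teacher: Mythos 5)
This theorem is imported by the paper from Paninski (2003) and stated without proof, so the only meaningful comparison is with the cited source's argument, which your proposal essentially reproduces: part (a) via McDiarmid's bounded-differences inequality with per-sample constant $2\log M/M$ (including the correct direct handling of the count-$1\to 0$ boundary bin, where a mean-value estimate fails), and part (b) via the identity $H_b(s) - \mathbb{E}[H_{\hat b}(s)\,|\,b] = \mathbb{E}[D_{\mathrm{KL}}(\hat b \,\|\, b)\,|\,b]$ together with two applications of Jensen's inequality and $\mathbb{E}[\hat b(s)^2\,|\,b] = b(s)^2 + b(s)(1-b(s))/M$. Your proof is correct, and your closing observation is also accurate: bounding the divergence pathwise by $\chi^2$ (or using $\log x \le x-1$) before taking the expectation over samples would only yield the weaker $-(\psi_b(s)-1)/M$, whereas pushing the expectation inside the logarithm via Jensen is exactly what produces the sharper bound $\mu_M(b) = -\log\bigl(1 + \frac{\psi_b(s)-1}{M}\bigr)$.
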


\noindent Hence \eqref{eq:probBound} bounds the variance of ${H}_{\hat{b}}(s)$ and \eqref{eq:biasBound} bounds its bias, which is always negative.

\subsection{Lower Confidence Bound}
Let ${H}_{\hat{b}}^{\mathcal{A}}(s|\mathbf{z})$ be defined as:
\begin{equation}
{H}_{\hat{b}}^{\mathcal{A}}(s|\mathbf{z}) = \sum_{\mathbf{z}_i \in \Omega} \Pr(\mathbf{z}_i|b,\mathcal{A}) {H}_{\hat{b}_{\mathbf{z}_{i}}^{\mathcal{A}}}(s),
\end{equation}
where ${H}_{\hat{b}_{\mathbf{z}_{i}}^{\mathcal{A}}}(s)$ is the MLE of the entropy of the posterior distribution $\hat{b}^{\mathcal{A}}_{\mathbf{z}}(s)$.

\begin{lemma}  \label{lem:low}
With probability $1 - \delta_{l}$, 
\begin{equation} 
{H}_{\hat{b}}^\mathcal{A}(s|\mathbf{z}) \leq H_{b}^\mathcal{A}(s|\mathbf{z}) + \eta,
\end{equation}
where $\delta_{l} =  |\Omega|\delta_{\eta}$.
\end{lemma}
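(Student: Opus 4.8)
The plan is to reduce the claimed bound on conditional entropy to a per-observation bound on ordinary entropy, and then combine Paninski's two properties (Theorem \ref{th:entropyEst}) with a union bound over $\Omega$. The key structural observation is that both $H_b^{\mathcal{A}}(s|\mathbf{z})$ and $H_{\hat{b}}^{\mathcal{A}}(s|\mathbf{z})$ are the \emph{same} convex combination $\sum_{\mathbf{z}_i \in \Omega} \Pr(\mathbf{z}_i|b,\mathcal{A})(\cdot)$ of the posterior entropies, differing only in that one uses the true entropy $H_{b_{\mathbf{z}_i}^{\mathcal{A}}}(s)$ and the other the MLE $H_{\hat{b}_{\mathbf{z}_i}^{\mathcal{A}}}(s)$. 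The weights $\Pr(\mathbf{z}_i|b,\mathcal{A})$ are identical and sum to one, so if I can control the gap $H_{\hat{b}_{\mathbf{z}_i}^{\mathcal{A}}}(s) - H_{b_{\mathbf{z}_i}^{\mathcal{A}}}(s) \leq \eta$ for every $\mathbf{z}_i$ simultaneously, then summing against the weights yields $H_{\hat{b}}^{\mathcal{A}}(s|\mathbf{z}) \leq H_b^{\mathcal{A}}(s|\mathbf{z}) + \eta$ directly, since $\eta \sum_{\mathbf{z}_i} \Pr(\mathbf{z}_i|b,\mathcal{A}) = \eta$.

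For the per-observation bound I would fix a single $\mathbf{z}_i$ and chain the two parts of Theorem \ref{th:entropyEst} applied to the posterior $b_{\mathbf{z}_i}^{\mathcal{A}}$. From part (a), with probability $1 - \delta_\eta$ I get the one-sided concentration $H_{\hat{b}_{\mathbf{z}_i}^{\mathcal{A}}}(s) \leq \mathbb{E}[H_{\hat{b}_{\mathbf{z}_i}^{\mathcal{A}}}(s)\,|\,b_{\mathbf{z}_i}^{\mathcal{A}}] + \eta$. From part (b), the bias is non-positive, so $\mathbb{E}[H_{\hat{b}_{\mathbf{z}_i}^{\mathcal{A}}}(s)\,|\,b_{\mathbf{z}_i}^{\mathcal{A}}] \leq H_{b_{\mathbf{z}_i}^{\mathcal{A}}}(s)$. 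Concatenating these gives $H_{\hat{b}_{\mathbf{z}_i}^{\mathcal{A}}}(s) \leq H_{b_{\mathbf{z}_i}^{\mathcal{A}}}(s) + \eta$ with probability $1 - \delta_\eta$. It is worth noting that the sign of the bias works in my favor here: because the MLE of entropy systematically \emph{underestimates} the true entropy in expectation, both effects (bias and deviation) push in a compatible direction, and I only need the upper half of Paninski's two-sided inequality rather than both tails.

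Finally, I would apply a union bound over the $|\Omega|$ observations: the probability that the per-observation bound fails for at least one $\mathbf{z}_i$ is at most $|\Omega|\delta_\eta = \delta_l$, so all $|\Omega|$ bounds hold simultaneously with probability $1 - \delta_l$, which is exactly the event under which the aggregation step of the first paragraph goes through. The main subtlety to get right is this union bound, since it is the sole source of the $|\Omega|$ factor in $\delta_l$ and tacitly requires that each posterior's MLE entropy be estimated from its own sample so that the failure events can be bounded term by term; the aggregation itself is routine once the favorable direction of the bias has been identified.
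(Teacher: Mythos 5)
Your proposal is correct and follows essentially the same route as the paper's proof: combine Theorem~\ref{th:entropyEst}(a) and (b) to obtain the per-observation bound ${H}_{\hat{b}_{\mathbf{z}_i}^{\mathcal{A}}}(s) \leq H_{b_{\mathbf{z}_{i}}^{\mathcal{A}}}(s) + \eta$ with probability $1 - \delta_{\eta}$, then take the expectation over $\mathbf{z}_i$ (i.e., the $\Pr(\mathbf{z}_i|b,\mathcal{A})$-weighted sum) and apply a union bound over $\Omega$ to get $\delta_l = |\Omega|\delta_{\eta}$. Your version is simply more explicit about the aggregation step and the favorable sign of the bias; the only quibble is that your closing remark about needing separate samples per posterior is unnecessary, since the union bound holds regardless of dependence between the failure events.
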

\begin{proof}
For a given $\mathbf{z}_{i}$, \eqref{eq:probBound} and \eqref{eq:biasBound} imply that, with probability $1 - \delta_{\eta}$, 
\begin{equation}
{H}_{\hat{b}_{\mathbf{z}_i}^{\mathcal{A}}}(s) \leq H_{b_{\mathbf{z}_{i}}^{\mathcal{A}}}(s) + \eta
\end{equation}
This is true for each $\mathbf{z}_i \in \Omega$. Taking an expectation over $\mathbf{z}_{i}$ and using a union bound, yields the final result.
\end{proof}

Typically, the bottleneck in computing ${H}_{\hat{b}}^{\mathcal{A}}(s|\mathbf{z})$ is performing the belief update to find $\hat{b}_{\mathbf{z}_{i}}^\mathcal{A}$ for each $\mathbf{z}_i$.  In practice, we approximate these using \emph{particle belief updates} \cite{particlef}, which, for a given $\mathbf{z}_i$, generate a sample $s^j$ from $\hat{b}(s)$ and then an observation $\mathbf{z}'$ from $\Pr(\mathbf{z}|s^j,\mathcal{A})$.  If $\mathbf{z}_i = \mathbf{z}'$, then $s^j$ is added to the set of samples approximating $\hat{b}_{\mathbf{z}_{i}^\mathcal{A}}$.  Consequently, ${H}_{\hat{b}}^{\mathcal{A}}(s|\mathbf{z})$ can be tightened by increasing $M$, the number of samples used to estimate $\hat{b}(s)$, and/or increasing the number of samples used to estimate each $\hat{b}_{\mathbf{z}_{i}^\mathcal{A}}$. However, tightening ${H}_{\hat{b}}^{\mathcal{A}}(s|\mathbf{z})$ by using larger values of $M$ is not practical as computing it involves new posterior belief updates (with a larger value of $M$) and hence increases the computational cost of tightening ${H}_{\hat{b}}^{\mathcal{A}}(s|\mathbf{z})$. 

\subsection{Upper Confidence Bound}

Since ${H}_{\hat{b}}(s)$ is negatively biased, finding an upper confidence bound is more difficult.  A key insight is that such a bound can nonetheless be obtained by estimating posterior entropy using an artificially ``coarsened'' observation function. That is, we group all possible observations into a set ${\Phi}$ of clusters and then pretend that, instead of observing $\mathbf{z}$, the agent only observes what cluster $\mathbf{z}$ is in.  Since the observation now contains less information, the conditional entropy will be higher, yielding an upper bound.  Furthermore, since the agent only has to reason about $|{\Phi}|$ clusters instead of $|\Omega|$ observations, it is also cheaper to compute.  Any generic clustering approach, e.g., ignoring certain observation features can be used, though in some cases domain expertise may be exploited to select the clustering that yields the tightest bounds.

Let $\mathbf{r} = \langle r_1 \dots r_n \rangle$ represent a crude approximation of $\mathbf{z}$. That is, for every $i$, $r_{i}$ is obtained from $z_{i}$ by $r_i = f(z_i,d)$, where $f$ clusters $z_{i}$ into $d$ clusters 
deterministically and $r_i$ denotes the cluster to which $z_i$ belongs.  %
Also, if $z_i = \emptyset$, then $r_i = \emptyset$. Note that $H_{b}(\mathbf{r}|\mathbf{z}) = 0$ and the domain of $r_i$ and $r_j$ share only $\emptyset$ for all $i$ and $j$.

\begin{lemma}
$H_{b}^\mathcal{A}(s|\mathbf{z}) \leq H_{b}^\mathcal{A}({s}|\mathbf{r})$.
\end{lemma}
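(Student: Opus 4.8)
The plan is to prove this as a standard instance of ``coarsening an observation cannot decrease residual uncertainty'': since each $r_i = f(z_i,d)$ is a deterministic function of $z_i$, the vector $\mathbf{r}$ is a deterministic function of $\mathbf{z}$ (equivalently $H_b(\mathbf{r}\mid\mathbf{z})=0$, as already noted), so $\mathbf{r}$ carries no more information about $s$ than $\mathbf{z}$ does. Conceptually this is exactly the data processing inequality applied to the Markov chain $s \to \mathbf{z} \to \mathbf{r}$, which gives $I_b(s;\mathbf{r}) \le I_b(s;\mathbf{z})$ (mutual information taken with respect to the prior $b$ and the selected set $\mathcal{A}$); subtracting both sides from the common prior entropy $H_b(s)$ immediately yields $H_b^\mathcal{A}(s\mid\mathbf{z}) \le H_b^\mathcal{A}(s\mid\mathbf{r})$.

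However, to stay within the paper's summation-based definitions I would instead argue directly via the concavity of entropy. First, I would write $H_b^\mathcal{A}(s\mid\mathbf{r}) = \sum_{\mathbf{r}} \Pr(\mathbf{r}\mid b,\mathcal{A}) H_{b_{\mathbf{r}}^\mathcal{A}}(s)$, mirroring the definition of $H_b^\mathcal{A}(s\mid\mathbf{z})$ but with the coarsened posterior $b_{\mathbf{r}}^\mathcal{A}(s) = \Pr(s\mid\mathbf{r},b,\mathcal{A})$. The key step is the mixture identity: because $\mathbf{r}$ is deterministic in $\mathbf{z}$, the coarse posterior for a cluster $\mathbf{r}$ is the convex combination $b_{\mathbf{r}}^\mathcal{A} = \sum_{\mathbf{z}: f(\mathbf{z})=\mathbf{r}} \Pr(\mathbf{z}\mid\mathbf{r},b,\mathcal{A})\, b_{\mathbf{z}}^\mathcal{A}$ of the fine posteriors over the observations $\mathbf{z}$ mapped into that cluster. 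Applying concavity of the entropy function to this mixture (Jensen's inequality) gives $H_{b_{\mathbf{r}}^\mathcal{A}}(s) \ge \sum_{\mathbf{z}:f(\mathbf{z})=\mathbf{r}} \Pr(\mathbf{z}\mid\mathbf{r},b,\mathcal{A}) H_{b_{\mathbf{z}}^\mathcal{A}}(s)$.

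Finally, I would take the expectation over clusters $\mathbf{r}$: multiplying the last inequality by $\Pr(\mathbf{r}\mid b,\mathcal{A})$ and summing, the weights factor through $\Pr(\mathbf{r}\mid b,\mathcal{A})\Pr(\mathbf{z}\mid\mathbf{r},b,\mathcal{A}) = \Pr(\mathbf{z}\mid b,\mathcal{A})$ (valid since the clusters partition $\Omega$), so the double sum collapses to $\sum_{\mathbf{z}}\Pr(\mathbf{z}\mid b,\mathcal{A}) H_{b_{\mathbf{z}}^\mathcal{A}}(s) = H_b^\mathcal{A}(s\mid\mathbf{z})$, giving $H_b^\mathcal{A}(s\mid\mathbf{r}) \ge H_b^\mathcal{A}(s\mid\mathbf{z})$ as claimed. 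I expect no serious obstacle here, since the result is classical; the only step requiring care is justifying the mixture identity and the weight factorization, i.e.\ checking that the deterministic coarsening makes the clusters a genuine partition of $\Omega$ so that Bayes' rule cleanly relates $b_{\mathbf{r}}^\mathcal{A}$ to the pooled fine posteriors. Concavity of entropy itself is standard and contributes the single inequality that drives the whole bound.
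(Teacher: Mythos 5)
Your proof is correct, but it takes a genuinely different route from the paper's. The paper's argument is identity-based: it applies the chain rule to the joint conditional entropy $H_{b}^{\mathcal{A}}(s,\mathbf{z}|\mathbf{r})$, expanding it two ways as $H_{b}^{\mathcal{A}}(s|\mathbf{z},\mathbf{r}) + H_{b}^{\mathcal{A}}(\mathbf{z}|\mathbf{r}) = H_{b}^{\mathcal{A}}(\mathbf{z}|s,\mathbf{r}) + H_{b}^{\mathcal{A}}(s|\mathbf{r})$, notes $H_{b}^{\mathcal{A}}(s|\mathbf{z},\mathbf{r}) = H_{b}^{\mathcal{A}}(s|\mathbf{z})$ since $\mathbf{r}$ adds no information beyond $\mathbf{z}$, and finishes by citing the fact that conditioning cannot increase entropy, $H_{b}^{\mathcal{A}}(\mathbf{z}|s,\mathbf{r}) \le H_{b}^{\mathcal{A}}(\mathbf{z}|\mathbf{r})$. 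You instead work directly with the paper's summation definition of conditional entropy: you show each coarse posterior $b_{\mathbf{r}}^{\mathcal{A}}$ is a convex combination of the fine posteriors $b_{\mathbf{z}}^{\mathcal{A}}$ over its cluster, apply concavity of entropy (Jensen), and collapse the double sum via $\Pr(\mathbf{r}|b,\mathcal{A})\Pr(\mathbf{z}|\mathbf{r},b,\mathcal{A}) = \Pr(\mathbf{z}|b,\mathcal{A})$. Both are sound; your mixture identity and weight factorization hold exactly because the clustering is deterministic, so the preimages $f^{-1}(\mathbf{r})$ genuinely partition $\Omega$. What each buys: the paper's proof is shorter if one is comfortable manipulating entropies of the triple $(s,\mathbf{z},\mathbf{r})$ and citing standard identities, whereas yours is more self-contained, staying entirely within the paper's operational definition of $H_{b}^{\mathcal{A}}(s|\cdot)$ as an expectation of posterior entropies, and it makes explicit precisely where determinism of the coarsening is used. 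At bottom the two proofs rest on the same fact --- the ``conditioning reduces entropy'' step the paper invokes is itself established by the concavity/Jensen argument you spell out --- but the packaging is genuinely different.
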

\begin{proof}
Using the chain rule for entropy, on $H_{b}^{\mathcal{A}}(s,\mathbf{z}|\mathbf{r})$
\begin{equation}
\begin{split}
H_{b}^\mathcal{A}(s|\mathbf{z,r}) + H_{b}^\mathcal{A}(\mathbf{z|r}) = H_{b}^\mathcal{A}(\mathbf{z}|s,\mathbf{r}) + H_{b}^\mathcal{A}(s|\mathbf{r}).
\end{split}
\end{equation}
Since $\mathbf{r}$ contains no additional information, $H_{b}^\mathcal{A}(s|\mathbf{z,r})$ = $H_{b}^\mathcal{A}(s|\mathbf{z})$, and $H_{b}^\mathcal{A}(s|\mathbf{z}) + H_{b}^\mathcal{A}(\mathbf{z|r}) = H_{b}^\mathcal{A}(\mathbf{z}|s,\mathbf{r}) + H_{b}^\mathcal{A}(s|\mathbf{r}).$
Since conditioning can never increase entropy \cite{cover}, $H_{b}^\mathcal{A}(\mathbf{z}|s,\mathbf{r}) \leq H_{b}^\mathcal{A}(\mathbf{z|r})$, and the stated result holds. \qedhere
\end{proof}

\noindent $H_{b}^\mathcal{A}(s|\mathbf{r})$ is cheaper to compute than $H_{b}^\mathcal{A}(s|\mathbf{z})$ because it requires only $|\Phi|$ belief updates instead of $|\Omega|$. Starting with a small $\Phi$, $H^{\mathcal{A}}_{\hat{b}}(s|\mathbf{r})$ can be tightened by increasing the number of clusters and thus $|\Phi|$.

Note that computing $H_{b}^{\mathcal{A}}(s|\mathbf{r})$ requires $\Pr(\mathbf{r}|s,\mathcal{A})$, which can be obtained by marginalizing $\mathbf{z}$ out from $\Pr(\mathbf{z}|s,\mathcal{A})$, a computationally expensive operation. However, this marginalization only needs to be done once and can be reused when performing greedy maximization for various $b(s)$. This occurs naturally in, e.g., sensor selection, where the hidden state that the agent wants to track evolves over time. At every time step, $b(s)$ changes and a new set $\mathcal{A}^{P}$ must be selected.

However, computing $H_{b_{\mathbf{r}}^{\mathcal{A}}}(s)$ still requires iterating across all values of $s$. Thus, to lower the computational cost further, we use estimates of entropy, as with the lower bound:
\begin{equation}
{H}_{\hat{b}}^\mathcal{A}(s|\mathbf{r}) = \sum_{\mathbf{r}_i \in {\Phi}} \Pr(\mathbf{r}_i|b,\mathcal{A}){H}_{\hat{b}_{\mathbf{r}_i}^{\mathcal{A}}}(s).
\end{equation}
Computing ${H}_{\hat{b}}^\mathcal{A}(s|\mathbf{r})$ is cheaper than ${H}_{b}^\mathcal{A}(s|\mathbf{r})$ but is not gauranteed to be greater than ${H}_{b}^\mathcal{A}(s|\mathbf{z})$ since the entropy estimates have negative
bias. However, we can still obtain an upper confidence bound.
\begin{lemma} \label{lem:up}
With probability $ 1 - \delta_{u}$
\begin{equation}
H_{b}^\mathcal{A}(s|\mathbf{z}) \leq {H}_{\hat{b}}^\mathcal{A}(s|\mathbf{r}) + \eta - \mu_{M}(b),
\end{equation}
where $\delta_{u} =  |\Phi|\delta_{\eta}$.
\end{lemma}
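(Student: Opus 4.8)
The plan is to follow the template of Lemma~\ref{lem:low} but to reverse the direction of every inequality, so that the \emph{true} conditional entropy is bounded from \emph{above} by its estimate. The first step is to replace the fine observation $\mathbf{z}$ by the coarsened observation $\mathbf{r}$: the preceding coarsening lemma gives the deterministic inequality $H_{b}^{\mathcal{A}}(s|\mathbf{z}) \le H_{b}^{\mathcal{A}}(s|\mathbf{r})$, which consumes no probability budget, so it suffices to bound $H_{b}^{\mathcal{A}}(s|\mathbf{r})$. By the definition of conditional entropy this quantity decomposes as $H_{b}^{\mathcal{A}}(s|\mathbf{r}) = \sum_{\mathbf{r}_i \in \Phi} \Pr(\mathbf{r}_i|b,\mathcal{A})\, H_{b_{\mathbf{r}_i}^{\mathcal{A}}}(s)$, a $\Pr(\mathbf{r}_i|b,\mathcal{A})$-weighted average of the true posterior entropies over the $|\Phi|$ clusters, which is exactly the object whose estimate ${H}_{\hat{b}}^{\mathcal{A}}(s|\mathbf{r})$ we want to compare against.

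Next I would prove a per-cluster inequality. Fixing a cluster $\mathbf{r}_i$ and applying Theorem~\ref{th:entropyEst} to the posterior $b_{\mathbf{r}_i}^{\mathcal{A}}$, I would use \eqref{eq:probBound} on the side opposite to the one used in Lemma~\ref{lem:low}, namely that with probability $1-\delta_{\eta}$ the estimate does not fall more than $\eta$ below its mean, so that $\mathbb{E}[{H}_{\hat{b}_{\mathbf{r}_i}^{\mathcal{A}}}(s)\mid b_{\mathbf{r}_i}^{\mathcal{A}}] \le {H}_{\hat{b}_{\mathbf{r}_i}^{\mathcal{A}}}(s) + \eta$. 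I would then invoke the \emph{lower} half of the bias bound \eqref{eq:biasBound}, $\mu_{M}(b) \le \mathbb{E}[{H}_{\hat{b}}(s)\mid b] - H_{b}(s)$, rearranged to bound the true entropy above by its expected estimate adjusted by the bias term. Chaining these two facts produces a per-cluster bound of the form ${H}_{b_{\mathbf{r}_i}^{\mathcal{A}}}(s) \le {H}_{\hat{b}_{\mathbf{r}_i}^{\mathcal{A}}}(s) + \eta + \mu_{M}(b)$, where the bias correction now appears explicitly; it was absent in Lemma~\ref{lem:low} precisely because there the negative bias of the MLE only strengthened the bound.

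The final step assembles the clusters. Taking the $\Pr(\mathbf{r}_i|b,\mathcal{A})$-weighted average of the per-cluster inequalities turns the left-hand side into $H_{b}^{\mathcal{A}}(s|\mathbf{r})$ and the estimate side into ${H}_{\hat{b}}^{\mathcal{A}}(s|\mathbf{r})$, and because the weights form a probability distribution the additive constants $\eta + \mu_{M}(b)$ survive unchanged. A union bound over the $|\Phi|$ clusters, each failing with probability at most $\delta_{\eta}$, collects the total failure probability into $\delta_{u} = |\Phi|\delta_{\eta}$, and combining with the deterministic coarsening step from the first paragraph yields the claimed inequality.

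I expect the bias term to be the main obstacle, for two reasons. First, unlike in Lemma~\ref{lem:low}, the negative bias of the MLE now works against us: to turn the estimate into a valid \emph{upper} bound on the truth we must use the \emph{lower} side of \eqref{eq:biasBound} and pay the bias penalty $\mu_{M}(b)$ explicitly, so care is needed to keep the two sides of \eqref{eq:biasBound} straight. Second, \eqref{eq:biasBound} is naturally stated in terms of the support of the distribution actually being estimated, which here is each posterior $b_{\mathbf{r}_i}^{\mathcal{A}}$ rather than the prior $b$; since Bayesian conditioning cannot enlarge the support, $\psi_{b_{\mathbf{r}_i}^{\mathcal{A}}}(s) \le \psi_{b}(s)$, and $\mu_{M}$ is monotone in the support, so replacing each posterior's bias term by the single quantity $\mu_{M}(b)$ is a sound (if slightly conservative) relaxation. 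I would state this monotonicity explicitly to justify the uniform, $i$-independent bias correction in the final bound.
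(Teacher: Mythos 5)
Your proof is correct and takes essentially the same route as the paper's: apply Theorem \ref{th:entropyEst}'s bias and concentration bounds per cluster to each posterior $b_{\mathbf{r}_i}^{\mathcal{A}}$, average with weights $\Pr(\mathbf{r}_i|b,\mathcal{A})$, and union bound over the $|\Phi|$ clusters to obtain $\delta_u = |\Phi|\delta_\eta$ (you even inherit the same sign convention for $\mu_M(b)$, which the lemma statement treats as a positive penalty although \eqref{eq:biasBound} defines it as nonpositive). If anything you are more explicit than the paper, whose proof leaves implicit both the final chaining with the coarsening inequality $H_b^{\mathcal{A}}(s|\mathbf{z}) \leq H_b^{\mathcal{A}}(s|\mathbf{r})$ and the justification that the prior's bias term $\mu_M(b)$ can uniformly stand in for each posterior's (your support-monotonicity observation $\psi_{b_{\mathbf{r}_i}^{\mathcal{A}}}(s) \leq \psi_{b}(s)$).
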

\begin{proof}
\eqref{eq:biasBound} implies that, for any fixed $\mathbf{r}_i \in {\Phi}$, 
\begin{equation}
H_{b_{\mathbf{r}_i}^{\mathcal{A}}}(s) \leq 
\mathbb{E}
[{H}_{\hat{b}_{\mathbf{r}_i}^{\mathcal{A}}}(s)~|~{b}_{\mathbf{r}_i}^{\mathcal{A}}] - \mu_{M}(b). 
\end{equation} 
Taking an expectation on both sides:
\begin{equation} \label{eq:temp}
\begin{split} 
\mathbb{E}_{\mathbf{r}_i}[H_{b_{\mathbf{r}_i}^{\mathcal{A}}}(s)~|~b,\mathcal{A}] \leq 
\mathbb{E}_{\mathbf{r}_i}
[\mathbb{E}
[{H}_{\hat{b}_{\mathbf{r}_i}^{\mathcal{A}}}(s)~|~{b}_{\mathbf{r}_i}^{\mathcal{A}}]~|~{b},\mathcal{A}] - \mu_{M}(b), \nonumber \\ 
\end{split}
\end{equation} 
Now, \eqref{eq:probBound} implies that, with probability $ 1 - \delta_{\eta}$,
\begin{equation}
\mathbb{E}
[{H}_{\hat{b}_{\mathbf{r}_i}^{\mathcal{A}}}(s)~|~{b}_{\mathbf{r}_i}^{\mathcal{A}}] 
\leq {H}_{\hat{b}_{\mathbf{r}_i}^{\mathcal{A}}}(s) + \eta. 
\end{equation}
Taking expectations on both sides and using a union bound gives, with probability $1 - \delta_u$,
\begin{equation}
\begin{split}
H_{b}^\mathcal{A}(s|\mathbf{r}) \leq {H}_{\hat{b}}^\mathcal{A}(s|\mathbf{r}) + \eta - \mu_{M}(b). 
\qedhere
\end{split}
\end{equation}
\end{proof}

In practice, we use a larger value of $M$ when computing ${H}_{\hat{b}}^{\mathcal{A}}(s|\mathbf{r})$ than ${H}_{\hat{b}}^{\mathcal{A}}(s|\mathbf{z})$.  Doing so is critical for reducing the negative bias in ${H}_{\hat{b}}^{\mathcal{A}}(s|\mathbf{z})$.  Furthermore, doing so does not lead to intractability because choosing a small $|\Phi|$ ensures that few belief updates will be performed.  

Thus, when computing ${H}_{\hat{b}}^{\mathcal{A}}(s|\mathbf{z})$, we set $M$ low but perform many belief updates; when computing ${H}_{\hat{b}}^{\mathcal{A}}(s|\mathbf{r})$ we set $M$ high but perform few belief updates.  This yields cheap upper and lower confidence bound for conditional entropy.

The following theorem ties together all the results presented in this paper. Note that, since $F$ is defined as \emph{negative} conditional entropy, $L$ is defined using our \emph{upper} bound and $U$ using our \emph{lower} bound.

\begin{theorem}
Let $F(\mathcal{A}) = H_{b}(s) - H_{b}^{\mathcal{A}}(s|\mathbf{z})$. Let $\mathtt{tighten}$ be defined such that $\mathtt{tighten(\mathcal{A}, t)}$ returns \\ $U_{t}(\mathcal{A}) = H_{b}(s) - H_{\hat{b}}^{\mathcal{A}}(s|\mathbf{z}) + \sqrt{\frac{2 (\log(M))^2}{M} \log(\frac{ 2n |\Omega| t (t + 1)}{\delta_u})} $ and \\
 $L_{t}(\mathcal{A}) = H_{b}(s) - [H_{\hat{b}}^{\mathcal{A}}(s|\mathbf{r}) +  \sqrt{\frac{2(\log(M))^{2}}{M} \log(\frac{2 n|\Omega| t (t+1)}{ \delta_{l}}}) + \log(1 + \frac{1}{M}{(|supp(b)| - 1)})]$. Let $\mathcal{A}^P = \mathtt{pac}\mathhyphen\mathtt{greedy}\mathhyphen\mathtt{max}(\mathtt{tighten},\mathcal{X},k,\epsilon_1)$ and $\mathcal{A}^* = \arg\max_{\mathcal{A} \in \mathcal{A}^+} F(\mathcal{A})$, where $\mathcal{X} = \{1, 2, \dots, n\}$ and $\mathcal{A}^+ = \{ \mathcal{A} \subseteq \mathcal{X}: |\mathcal{A}| \leq k \}$. If $\mathbf{z}$ is conditionally independent given $s$ then,
with probability $1 - \delta$, 
\begin{equation}
F(\mathcal{A}^{P}) \geq (1 - e^{-1})F(\mathcal{A}^{*})  - \epsilon,
\end{equation}
where $\delta = k(\delta_l + \delta_u)$, $\epsilon = k\epsilon_1$.
\end{theorem}
\begin{proof}
We showed that with probability $1 - \frac{\delta_{l}}{nt(t+1)}$, $L_{t}(\mathcal{A}) \leq F(\mathcal{A})$ and with probability $1 - \frac{\delta_u}{tn(t+1)}$, $U_{t}(\mathcal{A}) \geq F(\mathcal{A})$. 
Krause and Guestrin, 2005 showed that $Q$ is non-negative, monotone and submodular if $\mathbf{z}$ is conditionally independent given $s$. The $\mathtt{tighten}$ procedure can be designed by tightening the upper and lower bounds by either increasing $M$ or by changing the clusters used to estimate $H_{\hat{b}}(s|\mathbf{r})$. Thus, Theorem \ref{th:main1} with $\epsilon = k\epsilon_1$ and $\delta_{1} = \delta_{u} + \delta_{l}$
implies the stated result.
\end{proof}

\section{Related Work}
Most work on submodular function maximization focuses on algorithms for approximate greedy maximization that minimize the number of evaluations of $F$ \cite{lazy,multistage,thresholdgreedy,lazier}. In particular, \citet{lazier} randomly sample a subset from $\mathcal{X}$ on each iteration and select the element from this subset that maximizes the marginal gain. \citet{thresholdgreedy} selects an element on each iteration whose marginal gain exceeds a certain threshold. Other proposed methods that maximize surrogate submodular functions \cite{multistage,activeLearn1} or address streaming \cite{gomes} or distributed settings \cite{lazierdist}, also assume access to exact $F$.  In contrast, our approach assumes that $F$ is too expensive to compute even once and works instead with confidence bounds on $F$. \citet{igkrause} propose approximating conditional entropy for submodular function maximization while still assuming they can compute the exact posterior entropies; we assume computing exact posterior entropy is prohibitively expensive.

\citet{streeter2009online} and \citet{diverse} propose conceptually related methods that also assume $F$ is never computed exactly.  However, their \emph{online} setting is fundamentally different in that the system must first select an entire subset $\mathcal{A} \in \mathcal{A}^+$ and only then receives an estimate of $F(\mathcal{A})$, as well as estimates of the marginal gain of the elements in $\mathcal{A}$.  Since the system learns over time how to maximize $F$, it is a variation on the multi-armed bandit setting.  By contrast, we assume that feedback about a given element's marginal gain is available (through tightening $U$ and $L$) \emph{before} committing to that element.

As mentioned earlier, Algorithm \ref{alg:pac-m} is closely related to \emph{best arm identification} algorithms \cite{bestarm}. However, such methods assume an unbiased estimator of $F$ is available and hence concentration inequalities like Hoeffding's inequality are applicable. An exception is the work of \citet{racing}, which bounds the difference between an entropy estimate and that estimate's expected value.  However, since the entropy estimator is biased, this does not yield confidence bounds with respect to the true entropy. While they propose using their bounds for best arm identification, no guarantees are provided, and would be hard to obtain since the bias in estimating entropy has not been addressed. However, their bounds \cite[Corollary 2]{racing} could be used in place of Theorem \ref{th:entropyEst}a. While other work proposes more accurate estimators for entropy \cite{IGestimates,Paninski,Schurmann}, they are not computationally efficient and thus not directly useful in our setting.

Finally, greedy maximization is known to be \emph{robust to noise}  \cite{streeter2009online,submodularity}: if instead of selecting $i^{G} = \argmax_{i \in \mathcal{X} \setminus \mathcal{A}^{G}}\Delta(i|\mathcal{A}^G)$, we selects $i'$ such that $\Delta(i'|\mathcal{A}^{G}) \geq \Delta(i^G|\mathcal{A}^{G}) - \epsilon_1$, the total error is bounded by $\epsilon = k \epsilon_1$. We exploit this property in our method but use confidence bounds to introduce a probabilistic element, such that with high probability $\Delta(i^{P}|\mathcal{A}^{G}) \geq \Delta(i^G|\mathcal{A}^{G}) - \epsilon_1$.

\section{Experimental Results} \label{sec:exp}

We evaluated PAC greedy maximization on the problem of tracking multiple people using a multi-camera system. The problem was extracted from a real-world dataset collected over 4 hours using 13 CCTV cameras located in a shopping mall. Each camera uses a \emph{FPDW} pedestrian detector \cite{dollar} to detect people in each camera image and \emph{in-camera tracking} \cite{Bouma} to generate tracks of the detected people's movement over time.  The dataset thus consists of 9915 trajectories, each specifying one person's $x$-$y$ position. The field of view of a few cameras were divided into two or three separate regions and each region was treated as an independent camera, so as to enable more challenging experiments with as many as $n = 20$ cameras.

We first consider tracking a single person. The hidden state $s$ is modeled as the position and velocity of the person and described by the tuple $\langle x,y,v_x,v_y \rangle$, where $x$ and $y$ describe the position and $v_x$ and $v_y$ describe his velocity in the $x$ and $y$ directions.  Both $x$ and $y$ are integers in $\{0,\ldots,150\}$.  The surveillance area can be observed with $n=20$ cameras and, if selected, each camera produces an observation $\langle z^{x}, z^{y} \rangle$ containing an estimate of the person's $x$-$y$ position. 

We assume a person's motion in the $x$ direction is independent of his motion in the $y$ direction. Given the current position $x_{curr}$, the future position $x_{next}$ is a deterministic function of $x_{curr}$ and the current velocity in $x$-direction $v_{x}^{curr}$, i.e., $x_{next} = x_{curr} + v_{x}^{curr}$. The same is true for the $y$ position. The future velocity $v_{next}$ is modeled as a Gaussian distribution with the current velocity as the mean and the standard deviation, which depends on the current $x$-$y$ position, learnt from the data, i.e., $v_x^{next} \sim \mathcal{N}(v_x^{curr}, \sigma^x)$ and $v_y^{next} \sim \mathcal{N}(v_y^{curr}, \sigma^y)$. The observations are assumed to be conditionally independent given the state and are generated from a Gaussian distribution with the true position as the mean and a randomly generated standard deviation. Since ground truth data about people's locations is not available, learning the standard deviation is not possible. A belief $b(s)$ about the person's location was maintained using an unweighted particle filter with 200 particles. Given a subset of the sensors and the observations they generated, $b(s)$ is updated using a particle belief update \cite{particlef}

To evaluate a given algorithm, a trajectory was sampled randomly. At each timestep in the trajectory, a subset of $k$ cameras out of $n=20$ were selected by the algorithm. Using the resulting observations, the person was tracked using an unweighted particle filter \cite{particlef}, starting from a random initial belief. At each timestep, a prediction $\argmax_s b(s)$ about the person's location was compared to the person's true location.  Performance is the total number of correct predictions made over multiple trajectories. For multi-person tracking, the best subsets of cameras for each person were computed independently of each other and then the subset with the highest value of $F$ was selected.

\begin{figure}[h!]
\centering
\includegraphics[scale=0.18]{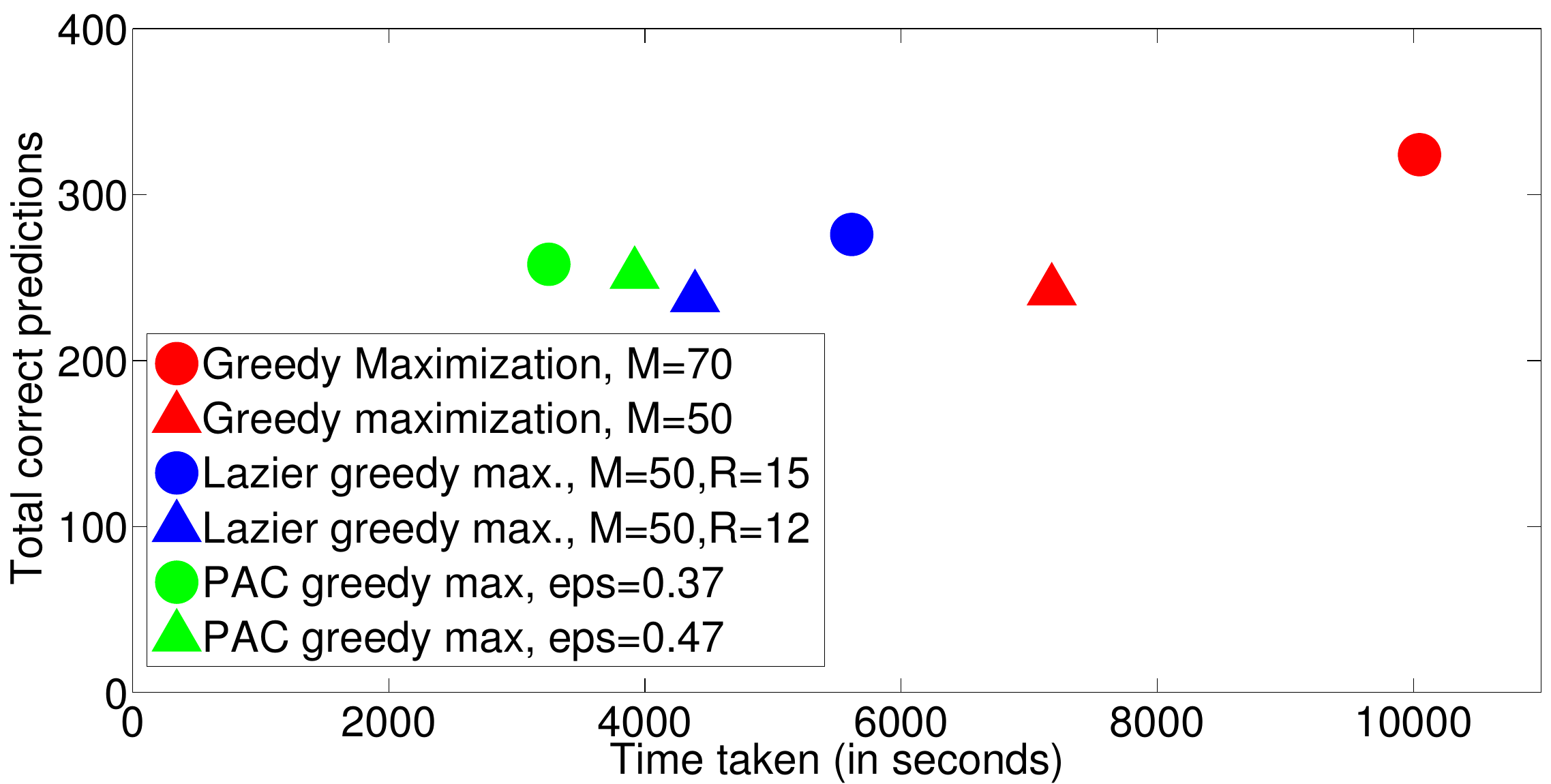} 
\includegraphics[scale=0.18]{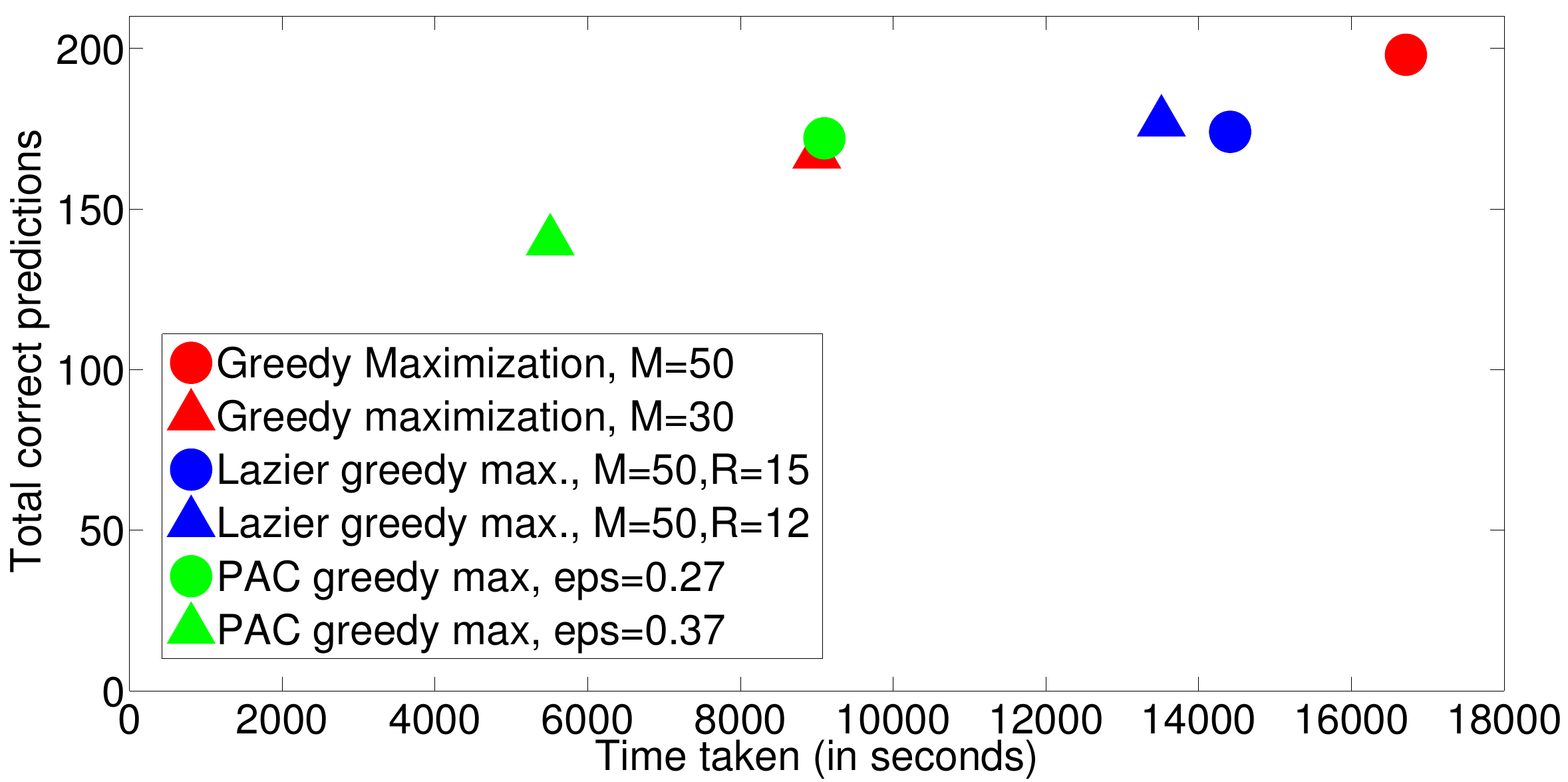} 
\includegraphics[scale=0.18]{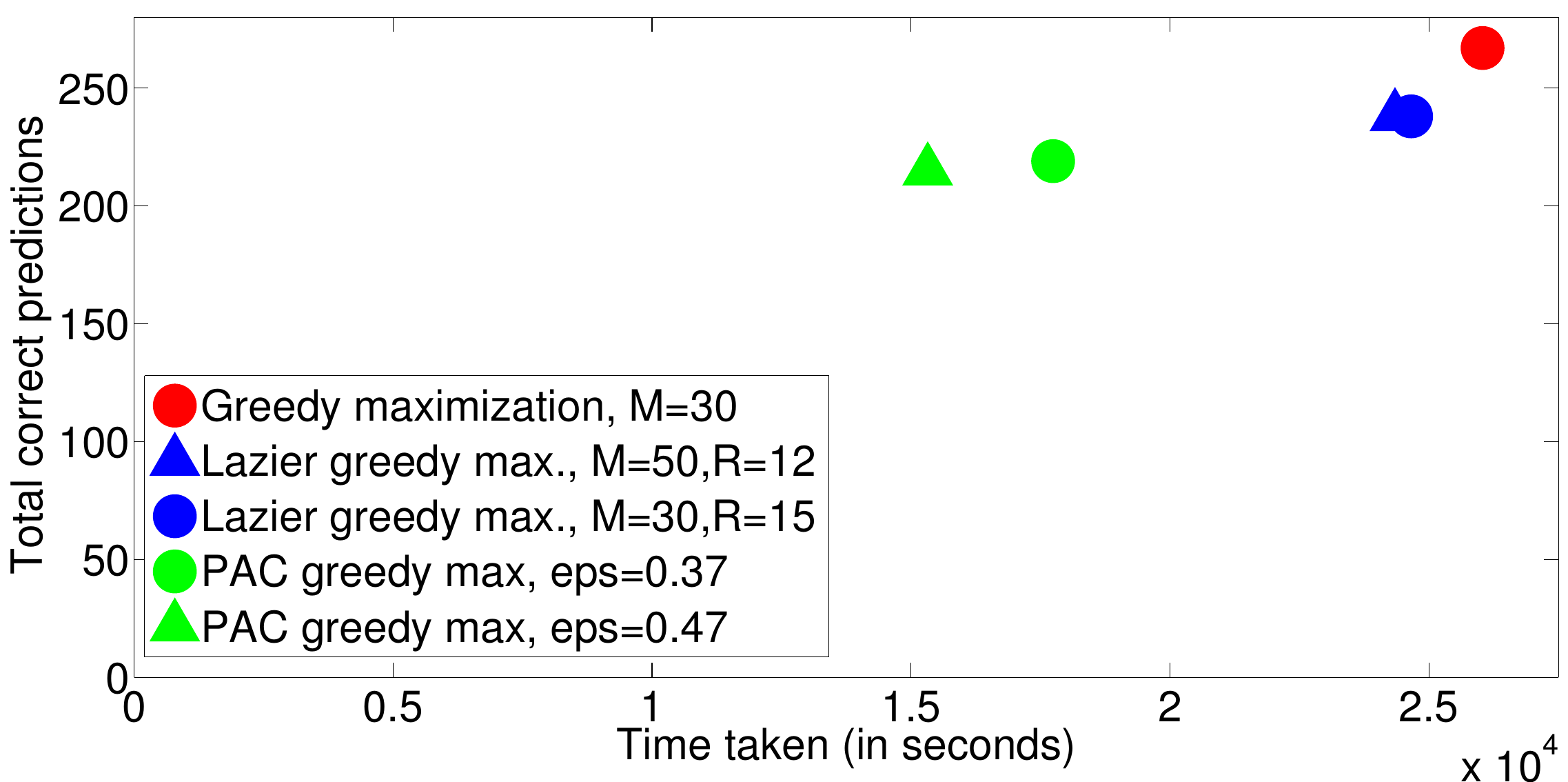}
\caption{Multi-person tracking for $n=20$ and (top) $k=1$; (middle) $k=2$; (bottom) $k=3$.}
\label{fig:results}
\end{figure}

We conducted experiments with different values of $n$ and $k$. As a baseline, we use greedy maximization and lazier greedy maximization. Since we cannot compute $F$ exactly, greedy maximization simply uses an approximation, based on MLE estimates of conditional entropy, ignoring the resulting bias and making no attempt to reason about confidence bounds. Lazier greedy maximization, in each iteration, samples a subset of size $R$ from $\mathcal{X}$ and selects from that subset the element that maximizes the estimated marginal gain. Neither greedy nor lazier greedy maximization employ lazy evaluations, i.e., pruning elements via a priority queue as in lazy greedy maximization, because the reliance on approximation of $F$ means pruning is no longer justified. In addition, since lazy greedy maximization's pruning is based on marginal gain instead of $F$, the bias is exacerbated by the presence of two entropy approximation instead of one.

For greedy maximization and lazier greedy maximization, the number of samples $M$ used to approximately compute $F$ was varied from 10 to 100. For lazier greedy maximization, the value of $R$ was also varied from 5 to 15. For PAC greedy maximization, the number of samples used to compute the upper and lower bounds were fixed to 20 and 10 respectively, while the parameter $\epsilon_{1}$ was varied from 0.1 to 0.5. On average the length of each trajectory sampled was 30 timesteps and the experiments were performed on 30 trajectories for $k=1$, 17 trajectories for $k=2$ and 20 trajectories for $k=3$, with 5 independent runs for $k=3$ and 3 independent runs for $k=2$ and $k=1$. To avoid clutter, we show results for only the two best performing parameter settings of each algorithm.

Figure \ref{fig:results} shows the number of correct predictions ($y$-axis) against the runtime ($x$-axis) of each method at various settings of $M$, $R$ and $\epsilon$.
Thus, the top left is the most desirable region.
In general, PAC greedy maximization performs nearly as well as the best-performing algorithm but does so at lower computational cost. Naively decreasing the number of samples only worsens performance and does not scale with $k$ as the computational cost of even performing greedy maximization with nominal samples is huge in the bottom plot. PAC greedy maximization on the other hand performs well in all the three settings and scales much better as $k$ increases, making it more suitable for real-world problems.

\section{Conclusions \& Future Work}
This paper proposed PAC greedy maximization, a new algorithm for maximizing a submodular function $F$ when computing $F$ exactly is prohibitively expensive. Our method assumes access to cheap confidence bounds on $F$ and uses them to prune elements on each iteration.  When $F$ involves entropy, as is common in many applications, obtaining confidence bounds is complicated by the fact that no unbiased estimator of entropy exists.  Therefore, we also proposed novel, cheap confidence bounds on conditional entropy that are suitable for use by PAC greedy maximization. We proved that the resulting method has bounded error with high probability. Our empirical results demonstrated that our approach performs comparably to greedy and lazier greedy maximization, but at a fraction of the computational cost, leading to much better scalability. In future  work, we aim to develop good strategies for clustering observations to obtain tight upper confidence bounds on conditional entropy, and to combine our upper and lower confidence bounds with more sophisticated best-arm identification algorithm sto produce an even more efficient version of PAC greedy maximization.

\section*{Acknowledgments}
We thank Henri Bouma and TNO for providing us with the dataset used in our experiments.  We also thank the STW User Committee for its advice regarding active perception for multi-camera tracking systems. This research is supported by the Dutch Technology Foundation STW (project \#12622), which is part of the Netherlands Organisation for Scientific Research (NWO), and which is partly funded by the Ministry of Economic Affairs.
Frans Oliehoek is funded by NWO Innovational Research Incentives Scheme Veni \#639.021.336.

\bibliographystyle{named}
\bibliography{tripleGreedy}

\end{document}